\tikzset{>=stealth}
\numberwithin{equation}{section}
\newtheorem{thm}{Theorem}
\numberwithin{thm}{section}
\newtheorem{lemma}[thm]{Lemma}
\newtheorem{defn}[thm]{Definition}
\newtheorem{cor}[thm]{Corollary}
\newtheorem{algo}[thm]{Algorithm}
\newcommand{\defeq}{\ensuremath\coloneqq}
\DeclareMathOperator{\lo}{lo}
\DeclareMathOperator{\hi}{hi}
\DeclareMathOperator{\ball}{ball}
\DeclareMathOperator{\PDiag}{PDiag}
\DeclareMathOperator{\M}{\mathcal{M}}
\title{Topological Parallax: A Geometric Specification for Deep Perception Models}
\author{%
  Abraham D.~Smith\\ 
  Geometric Data Analytics, Inc.\\
  343 W. Main Street\\
  Durham, NC 27701 USA\\
  \texttt{abraham.smith@geomdata.com}\\
  University of Wisconsin-Stout\\
  Math, Stats, and CS Dept\\
  Menomonie, WI 54751 USA\\
  \texttt{smithabr@uwstout.edu}\\
  \And
  Michael J. Catanzaro\\ 
  Geometric Data Analytics, Inc.\\
  343 W. Main Street\\
  Durham, NC 27701 USA\\
  \texttt{michael.catanzaro@geomdata.com}\\
  \And
  Gabrielle Angeloro\\ 
  Geometric Data Analytics, Inc.\\
  343 W. Main Street\\
  Durham, NC 27701 USA\\
  \texttt{gabrielle.angeloro@geomdata.com}\\
  \And
  Nirav Patel\\ 
  Geometric Data Analytics, Inc.\\
  343 W. Main Street\\
  Durham, NC 27701 USA\\
  \texttt{nirav.patel@geomdata.com}\\
  \And
  Paul Bendich\\
  Geometric Data Analytics, Inc.\\
  343 W. Main Street\\
  Durham, NC 27701 USA\\
  \texttt{paul.bendich@geomdata.com}\\
  Duke University\\
  Mathematics Dept\\
  Durham, NC 27708 USA\\
  \texttt{bendich@math.duke.edu}\\
  }
\begin{document}

\maketitle

\begin{abstract}
For safety and robustness of AI systems, we introduce \emph{topological parallax} as a 
theoretical and computational tool that compares a trained model to a reference dataset to determine whether they have similar multiscale geometric structure. 

Our proofs and examples show that this geometric similarity between dataset and model is essential 
to trustworthy interpolation and perturbation, and we conjecture that this new concept will add value to the current debate regarding the unclear relationship between ``overfitting'' and ``generalization'' in applications of deep-learning. 

In typical DNN applications, an explicit geometric description of the model is
impossible, but parallax can estimate topological features (components, cycles, voids, etc.)
in the model by examining the effect on the Rips complex of geodesic distortions using the reference dataset.
Thus, parallax indicates whether the model shares similar multiscale geometric features with the dataset.

Parallax presents theoretically via topological data analysis [TDA] as a bi-filtered persistence module,
and the key properties of this module are stable under perturbation of the reference dataset.
\end{abstract}

\section{Introduction}
\label{sec:intro}
Suppose $X$ is a finite subset of $V=\mathbb{R}^n$ with the Euclidean metric.\footnote{The theoretical results apply if $V$ is any geodesic space---a metric space in which each distance is realized by a path---but our motivation and examples use $V=\mathbb{R}^n$ to avoid distraction from our central theme of model assessment.}
In data science---particularly in applications of DNNs---we
often encounter the situation where $X$ is a dataset,
and some opaque algorithm has produced a trained model $k:V \to \{0,1\}$,
where $k(x)=1$ for all $x \in X$.
This defines the model as a set of accepted inputs $K=\{x:k(x)=(1)\} \subset V$,
which has no available description beyond evaluation of the perception function $k$ on samples.

Our main contribution in this paper 
is a method we call \emph{topological parallax} to
estimate the multiscale geometry of $K$ from the persistent homology (\cite{edelsbrunner_computational_2010, zomorodian_computing_2005}) of $X$,
in a situation where $K$ does not have an explicit description.\footnote{Topological parallax was named by analogy to the method in astronomy, where an inaccessible object cannot be measured directly (in this case, the geometry of $K$), so we must infer its location by comparing multiple observations from available vantage points (in this case, the dataset $X$).}
This method provides meaningful geometric information about $K$ through a simple
computational approach that can be applied to any perception model $k$.
We prove that the resulting criterion of \emph{homological matching} 
satisfies a stability property.
We propose homological matching via parallax as a geometric specification that
could be applied to many machine-learning systems.
The measurement of homological matching also admits a back-propagation scheme,
which could be used to improve the geometric similarity between the model $K$ and the dataset $X$. 

Because of the generality of Definition~\ref{def:M}, it may be that $V$ represents any layer of a neural network, $X$ represents any dataset mapped into that layer, and $K$ represents an activated region in that layer.
For example, the ``neural collapse'' concept from \citet{papyan_prevalence_2020} can be seen as a special case of this specification,
because the conception from \cite{papyan_prevalence_2020} is that the dataset $X$ becomes a tight blob in the penultimate layer, and the penultimate model is simply a Voronoi region $K$ surrounding that blob.

\subsection{Assumptions and Motivation}
As discussed by \citet{belkin_fit_2021}, DNNs usually achieve high statistical accuracy, but some resulting models are better than others at capturing patterns in the dataset. 
Despite having perfect statistical accuracy on the dataset $X$, 
fundamental questions arise about the model $K$: 
``Is it safe to deploy? Is it trustworthy? Is it a good model?''
To broadly paraphrase \cite{belkin_fit_2021}, it used to be good practice to tell data analysts not to overfit their data, because overfit models were ``bad'' due to poor generalization; however, essentially every DNN fits the training data perfectly, so it is not clear what distinguishes ``good'' models from ``bad.''
\emph{We suggest that a model $K$ is ``good'' if the geometry of $K$ matches the geometry of $X$.}
Consider the two example models in Figure~\ref{fig:winky}. Although both models 
achieve perfect statistical accuracy,
only one of them appears to have learned the geometric structure of the dataset.
This suggestion is likely intuitive to many ML engineers, but
the subtlety lies in the implicit assumptions often made about the nature of 
the dataset and the available models.
We assess this matching in a way that is independent of the architecture that produced $k$, and that makes very few assumptions about $X$ and $K$, as encapsulated by Definition~\ref{def:M}, which are assumed henceforth.

\begin{defn}[Datasets and Models]
Suppose that $V$ is a geodesic space.
We define a model \(K \subset V\) to be the closure of an open set, $K = \overline{K^\circ} \subset V$, colloquially known as a ``solid.''  For any finite dataset \(X \subset V\), we consider the collection of all models for which \(X\) is contained in the interior of the model, \( X \subset K^\circ \); let $\M(X) \defeq \{ K \subset V ~:~ X \subset K^\circ,\ \overline{K^\circ} = K\}$.
With subset inclusions as morphisms, $\M(X)$ is a small category.
For any $K \subset V$ with $K = \overline{K^\circ}$, let $\M^*(K) \defeq \{ X \subset V ~:~ X \subset K^\circ,\ \#X < \infty\}$.
With functions of finite sets as morphisms, $\M^*(K)$ is a small category.
Of course, $X \in \M^*(K) \Leftrightarrow K \in \M(X)$. Note that $V \in \M(X)$.
\label{def:M}
\end{defn}
\begin{figure}
\begin{center}
\includegraphics[width=0.23\linewidth]{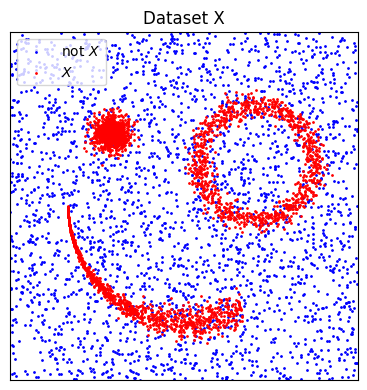}
\includegraphics[width=0.25\linewidth]{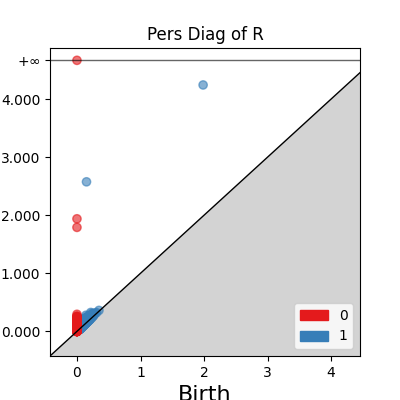}
\includegraphics[width=0.23\linewidth]{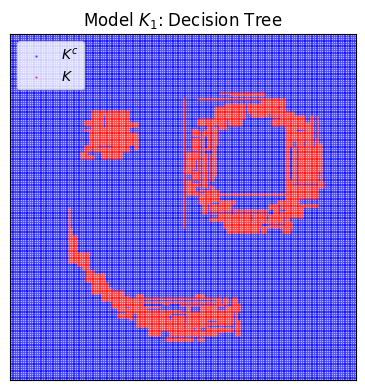}
\includegraphics[width=0.248\linewidth]{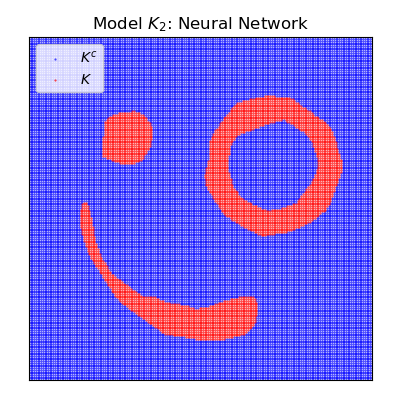}
\end{center}
  \caption{1: A dataset $X \subset \mathbb{R}^2$. 2: The persistence diagram of the Rips complex of $X$, showing three components (two of which die at the dim-0 dots near 2.0) and two cycles (the dim-1 dots, representing the annulus and the overall arrangement) among the persistent features. 3: A tree model $K_1$. 4: A neural network model $K_2$. 
  The tree model has many small voids that would forbid interpolation or perturbation in many locations.  Without the luxury to visualize models $K \subset V$ for $\dim V > 3$, 
  Parallax can measure geometric similarity between $X$ and $K$ via the Rips complex $R$ of $X$.}\label{fig:winky}
\end{figure}
\def\a{15}
\def\b{6}
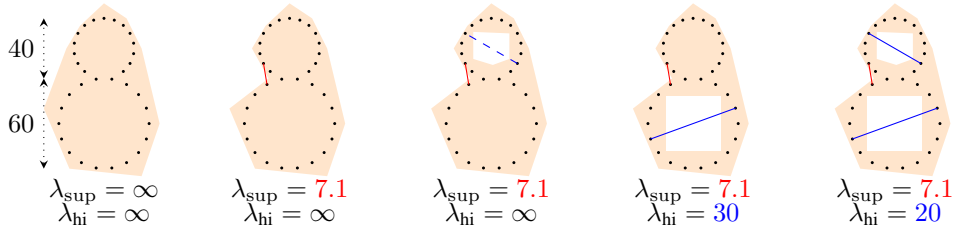
\begin{figure}
\begin{center}
\begin{minipage}{0.02\linewidth}
\begin{tikzpicture}[scale=0.20]
\end{tikzpicture}
\end{minipage}
\hfill
\begin{minipage}{0.20\linewidth}
\begin{tikzpicture}[scale=0.20]
\draw[dotted,<->] (-4,-3) -- (-4,3);
\draw[dotted,<->] (-4, 3) -- (-4,7);
\draw (-4,5) node[left] {$40$};
\draw (-4,0) node[left] {$60$};
\fill[color=orange,opacity=0.2] (2.5,-3.5) -- (3.7, 0.0) -- (2.5, 5.0) -- (1.5, 7.0)  -- (0.0, 8.0) -- (-1.6, 6.5) -- (-2.5, 5.0) -- (-4.0, 1.0) -- (-2.3, -3.0) -- cycle;
\foreach \t in {2,3,...,16} { \fill[color=black] ({2*cos(\t*20 - 90)}, {5+ 2*sin(\t*20 - 90)}) circle (1mm);}
\foreach \t in {1,2,...,18} { \fill[color=black] ({3*cos(\t*20)}, {3*sin(\t*20)}) circle (1mm);}
\draw (0,-4.5) node {$\lambda_{\sup} = \infty$};
\draw (0,-6.0) node {$\lambda_{\text{hi}} = \infty$};
\end{tikzpicture}
\end{minipage}
\hfill
\begin{minipage}{0.18\linewidth}
\begin{tikzpicture}[scale=0.20]
\fill[color=orange,opacity=0.2] (2.5,-3.5) -- (3.7, 0.0) -- (2.5, 5.0) -- (1.5, 7.0)  -- (0.0, 8.0) -- (-1.6, 6.5) -- (-2.5, 5.0) -- (-1.4, 3.0) -- (-4.0, 1.0) -- (-2.3, -3.0) -- cycle;
\foreach \t in {2,3,...,16} { \fill[color=black] ({2*cos(\t*20 - 90)}, {5+ 2*sin(\t*20 - 90)}) circle (1mm);}
\foreach \t in {1,2,...,18} { \fill[color=black] ({3*cos(\t*20)}, {3*sin(\t*20)}) circle (1mm);}
\draw (0,-4.5) node {$\lambda_{\sup} = \color{red}{7.1}$};
\draw (0,-6.0) node {$\lambda_{\text{hi}} = \infty$};
\draw[red] ({2*cos(\a*20-90)}, {5+2*sin(\a*20-90)}) -- ({3*cos(\b*20)}, {3*sin(\b*20)});
\end{tikzpicture}
\end{minipage}
\hfill
\begin{minipage}{0.18\linewidth}
\begin{tikzpicture}[scale=0.20]
\fill[color=orange,opacity=0.2] (2.5,-3.5) -- (3.7, 0.0) -- (2.5, 5.0) -- (1.5, 7.0)  -- (0.0, 8.0) -- (-1.6, 6.5) -- (-2.5, 5.0) -- (-1.4, 3.0) -- (-4.0, 1.0) -- (-2.3, -3.0) -- cycle;
\fill[color=white,opacity=1.0] (-1.2, 4.3) -- (0.1, 3.9) -- (1.2, 4.3) -- (1.2, 6.0) -- (-1.2, 6.1) -- cycle;
\foreach \t in {2,3,...,16} { \fill[color=black] ({2*cos(\t*20 - 90)}, {5+ 2*sin(\t*20 - 90)}) circle (1mm);}
\foreach \t in {1,2,...,18} { \fill[color=black] ({3*cos(\t*20)}, {3*sin(\t*20)}) circle (1mm);}
\draw (0,-4.5) node {$\lambda_{\sup} = \color{red}{7.1}$};
\draw (0,-6.0) node {$\lambda_{\text{hi}} = \infty$};
\draw[blue,dashed] ({2*cos(3*20-90)}, {5+2*sin(3*20-90)}) -- ({2*cos(12*20-90)}, {5+2*sin(12*20-90)});
\draw[red] ({2*cos(\a*20-90)}, {5+2*sin(\a*20-90)}) -- ({3*cos(\b*20)}, {3*sin(\b*20)});
\end{tikzpicture}
\end{minipage}
\hfill\begin{minipage}{0.18\linewidth}
\begin{tikzpicture}[scale=0.20]
\fill[color=orange,opacity=0.2] (2.5,-3.5) -- (3.7, 0.0) -- (2.5, 5.0) -- (1.5, 7.0)  -- (0.0, 8.0) -- (-1.6, 6.5) -- (-2.5, 5.0) -- (-1.4, 3.0) -- (-4.0, 1.0) -- (-2.3, -3.0) -- cycle;
\fill[color=white,opacity=1.0] (-1.8,-1.8) -- (1.8,-1.8) -- (1.8, 1.8) -- (-1.8, 1.8) -- cycle;
\foreach \t in {2,3,...,16} { \fill[color=black] ({2*cos(\t*20 - 90)}, {5+ 2*sin(\t*20 - 90)}) circle (1mm);}
\foreach \t in {1,2,...,18} { \fill[color=black] ({3*cos(\t*20)}, {3*sin(\t*20)}) circle (1mm);}
\draw (0,-4.5) node {$\lambda_{\sup} = \color{red}{7.1}$};
\draw (0,-6.0) node {$\lambda_{\text{hi}} = \color{blue}{30}$};
\draw[red] ({2*cos(\a*20-90)}, {5+2*sin(\a*20-90)}) -- ({3*cos(\b*20)}, {3*sin(\b*20)});
\draw[blue] ({3*cos(1*20)}, {3*sin(1*20)}) -- ({3*cos(10*20)}, {3*sin(10*20)});
\end{tikzpicture}
\end{minipage}
\hfill
\begin{minipage}{0.18\linewidth}
\begin{tikzpicture}[scale=0.20]
\fill[color=orange,opacity=0.2] (2.5,-3.5) -- (3.7, 0.0) -- (2.5, 5.0) -- (1.5, 7.0)  -- (0.0, 8.0) -- (-1.6, 6.5) -- (-2.5, 5.0) -- (-1.4, 3.0) -- (-4.0, 1.0) -- (-2.3, -3.0) -- cycle;
\fill[color=white,opacity=1.0] (-1.8,-1.8) -- (1.8,-1.8) -- (1.8, 1.8) -- (-1.8, 1.8) -- cycle;
\fill[color=white,opacity=1.0] (-1.2, 4.3) -- (0.1, 3.9) -- (1.2, 4.3) -- (1.2, 6.0) -- (-1.2, 6.1) -- cycle;
\foreach \t in {2,3,...,16} { \fill[color=black] ({2*cos(\t*20 - 90)}, {5+ 2*sin(\t*20 - 90)}) circle (1mm);}
\foreach \t in {1,2,...,18} { \fill[color=black] ({3*cos(\t*20)}, {3*sin(\t*20)}) circle (1mm);}
\draw (0,-4.5) node {$\lambda_{\sup} = \color{red}{7.1}$};
\draw (0,-6.0) node {$\lambda_{\text{hi}} = \color{blue}{20}$};
\draw[red] ({2*cos(\a*20-90)}, {5+2*sin(\a*20-90)}) -- ({3*cos(\b*20)}, {3*sin(\b*20)});
\draw[blue] ({2*cos(3*20-90)}, {5+2*sin(3*20-90)}) -- ({2*cos(12*20-90)}, {5+2*sin(12*20-90)});
\draw[blue] ({3*cos(1*20)}, {3*sin(1*20)}) -- ({3*cos(10*20)}, {3*sin(10*20)});
\end{tikzpicture}
\end{minipage}
\end{center}
\caption{Five models for the same dataset, and the scales detected by parallax. The circle have radii 20 and 30, respectively. See Section~\ref{sec:interpretation} for general interpretation.
The missing edge causing $\lambda_{\sup}$ is highlighted in red at filtration radius 7.1, and the missing edges that cause $\lambda_{\text{hi}}$ are highlighted in blue.
The dashed blue edge does not affect $\lambda_{\text{hi}}$ because the larger cycle determines that value.
}\label{fig:5models}
\end{figure}

Note that we make no assumptions whatsoever about the architecture or training method that yielded $K$.
With such a broad definition of $X$ and $K$, we need a notion of ``geometry'' with very few assumptions.
We use standard topological notation \cite{munkres_topology_2000, hatcher_algebraic_2001, edelsbrunner_computational_2010}.
\begin{defn}[Void]
Given a set $K \subset V$, an \emph{interpolative void in $K$} is a bounded open set in the complement of $K$, $\Omega \subset K^c$, such that there exists a pair of points $x,y \in K$ for which all $V$-geodesics pass through $\Omega$.
If $V = \mathbb{R}^n$, this means $\Omega$ intersects the convex hull of $K$. 
\label{def:void}
\end{defn}

Why focus on voids?
As observed by \citet{balestriero_learning_2021}, when $V$ is high-dimensional, it is
unlikely that any points in $X$ lie in the convex hull of any others.
However, it also seems unlikely that a ``good'' model $K$ of $X$ will be merely a convex solid.
If convex models were sufficient for real-world problems, DNNs would be unnecessary,
and the field would have concluded with PCA, Gaussian kernels, and convex polytopes.
For example, many models implicitly or explicitly rely on the so-called manifold hypothesis,
which is the hope that realistic datasets $X$ will tend to be distributed near a union of
lower-dimensional manifolds immersed in $V$,
in which case a ``good'' model $K$ would be a slight thickening of those manifolds to allow for measurement error. If there are multiple manifolds or there is nonzero curvature, such a model will have  voids.

Any void in $K$ indicates a region where $K$ does not allow interpolation,
which is where interesting geometry occurs.
Voids occur if and only if some geodesic in $K$ is strictly longer than the corresponding geodesic in $V$. 
Hence, for this article, we interpret ``geometry of $K$'' as the presence or absence of voids.
We do not make assumptions about the geometric features of $X$ or about the family from which $K$ is chosen;
rather, we ask that $K$ respects the features of $X$, whatever they are.
We propose that a ``good'' model $K$ is one whose voids represent the highly persistent features of $X$, in the sense of Topological Data Analysis [TDA, \cite{edelsbrunner_computational_2010, oudot_persistence_2017}]

A model $K$ can be ``bad'' because it has too many or too few voids at various scales.
For example, mismatch of voids of $K$ and features of $X$ would indicate that $K$ is over-sensitive to small error or under-sensitive to large error, either of which could lead to adversarial attack.
Also, numerous small-scale voids could make $K$ incompatible with some forms of the manifold hypotheses, by obstructing the coverage of $K$ by an atlas of local convex charts of moderate size.

\subsection{Outline}
Section~\ref{sec:parallax} introduces our key object, the bi-graded \cite{botnan_introduction_2023} parallax complex,
in Definition~\ref{def:parallax}, which measures geodesic distortion via the Rips complex. 
Section~\ref{sec:perturbation} provides a notion of dataset perturbation and shows that the parallax complex and its homology remain stable under those perturbations.
Section~\ref{sec:LSM} defines \emph{local simplicial matching} and shows how parallax detects small-scale changes in the Rips complex of $X$ in $K$ versus $V$, giving a clearly interpretable scale of locality, $\lambda_{\text{lo}}$.
Section~\ref{sec:HM} defines \emph{homological matching} and shows how parallax detects large-scale voids in $K$, giving a scale $\lambda_{\text{hi}}$ above which homological features in $X$ are respected by voids in $K$.
Together, these results provide an overall interpretation as a specification in Section~\ref{sec:interpretation},
which largely achieves the goals laid out in Section~\ref{sec:intro}.
Section~\ref{sec:algo} provides computational approaches to computing parallax, and links to our open-source software that has many practical improvements not detailed in this paper.
Section~\ref{sec:example_cyclo} illustrates the effectiveness of parallax as a specification, as demonstrated on two models using the cyclo-octane dataset \cite{martin_topology_2010}.
Additional proofs, details, and examples are provided in the Supplementary Material appendices. 

\subsection{Related Work}
To the best of our knowledge this is the first work to use TDA to express a desired geometric relationship that holds directly between datasets and models trained on them.
There has been some work, for example the Manifold Topology Divergence of \citet{barannikov_manifold_2021} or the Geometric Score of \citet{khrulkov_geometry_2018}, which uses various TDA-based measures to quantify the difference between training data and new data generated by generative models.
Quite a few other papers (see \citet{fernandez_intrinsic_2023} for a very recent example) use TDA-based constructions to infer properties of underlying data manifolds, usually under very strict sampling assumptions.

More broadly, there has been a recent explosion of work (e.g. \citet{hensel_survey_2021}) connecting TDA to ML/DL.
Several works (e.g. \citet{adams_persistence_2017, bendich_persistent_2016}) use TDA as a \emph{feature extraction} method, pre-processing more complicated data objects before running standard ML pipelines. Later works (e.g. \citet{chen_topological_2019, demir_topology-aware_2023, solomon_fast_2021, nigmetov_topological_2022}) use TDA to define novel losses within ML algorithms. Note that we comment below on ways in which our notion of homological matching can be used to define a TDA-based loss.  TDA has also been used (e.g. \citet{naitzat_topology_2020, wheeler_activation_2021}) to analyze the behavior of data as it passes through the layers of a DNN. 
Some works (e.g. \citet{guss_characterizing_2018}) assess the capacity of a specific DNN to classify datasets with specific shapes, but do not provide tools to quantify shape mismatch between model and dataset.
Finally, several works (e.g. \citet{carriere_perslay_2020, papillon_architectures_2023}) use TDA to define novel DNN architectures, including GNNs and other higher-order combinatorial structures.

There is also a recent stream (e.g. \citet{liu_algorithms_2021, wang_efficient_2018}) of work that builds validation and verification/falsification techniques for desired properties of DNN-trained models; these mostly focus on the mechanics of how to verify/falsify such properties, rather than attempting to define them as we do. Perhaps the closest work in this stream to ours is \citet{dola_distribution-aware_2021}, which uses a prior assumption on the underlying data distribution to verify/falsify DNN properties.

\section{The Parallax Bi-Complex}
\label{sec:parallax}
Let \(B_{\alpha}(x)\) denotes the closed geodesic ball of radius \(\alpha\) around \(x \in V\).
For a formal edge \(e=(x_0,x_1)\) between points in \(X\), \(\rho_V(e)\) is the minimum radius for which
\(B_{\rho_V(e)}(x_0)\) intersects \(B_{\rho_V(e)}(x_1)\). Thus, \(2 \rho_V(e)\) is the geodesic distance between \(x_0\) and \(x_1\).
The Rips complex \( R(X,V) \) is the simplicial complex generated by these edges,
as filtered by \( \rho_V(e)\). A chain is a formal sum of simplices in a complex \cite{edelsbrunner_computational_2010}.
More generally,
for any $K \in \M(X)$, the Rips complex $R(X,K)$ and its filtration by $0 \leq \alpha < \infty$ is defined by
\begin{equation}
    (x_0,\ldots,x_d) \in R_d(X,K)_\alpha\ \text{if and only if}\ B_\alpha(x_i) \cap B_\alpha(x_j) \cap K \neq \emptyset,\ \forall\ 0 \leq i < j \leq d.
\end{equation}
For any chain $Y \in R(X,K)$, let $\rho_{K}(Y) = \min \{ \alpha\,:\, Y \in R(X,K)_\alpha \}$.
When $X$ and $V$ are understood in context, we abbreviate $R=R(X,V)$ for the ambient case $K=V$.

\begin{lemma}
If $K_1, K_2 \in \M(X)$ with $K_1 \subset K_2 \subset V$, then there is a natural inclusion of filtered modules,
$R(X,K_1)_\alpha \subset R(X,K_2)_\alpha \subset R_\alpha$. That is, 
$\rho_{V}(Y) \leq \rho_{K_2}(Y) \leq \rho_{K_1}(Y)$, with the convention $\min \emptyset = \infty$. 
\end{lemma}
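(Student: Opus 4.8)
The plan is to reduce everything to a single pointwise monotonicity observation. Fix $\alpha$ and a tuple $(x_0,\ldots,x_d)$ with all $x_i \in X$. Since $X \subset K_1^\circ \subset K_1 \subset K_2 \subset V$, the vertex set is common to all three complexes, so the only thing that can change is whether the defining condition holds. For each pair $i<j$ we have $B_\alpha(x_i)\cap B_\alpha(x_j)\cap K_1 \subseteq B_\alpha(x_i)\cap B_\alpha(x_j)\cap K_2 \subseteq B_\alpha(x_i)\cap B_\alpha(x_j)\cap V = B_\alpha(x_i)\cap B_\alpha(x_j)$, so nonemptiness propagates from left to right. Taking the conjunction over all pairs $i<j$ (the condition is symmetric in $i,j$, so the ordering is irrelevant) shows $(x_0,\ldots,x_d)\in R_d(X,K_1)_\alpha \Rightarrow (x_0,\ldots,x_d)\in R_d(X,K_2)_\alpha \Rightarrow (x_0,\ldots,x_d)\in R_d(X,V)_\alpha$.

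Next I would upgrade this to the claimed inclusion of filtered modules. First note each $R(X,K)_\alpha$ is genuinely a simplicial complex: if a tuple satisfies the pairwise condition, so does every subtuple, since its set of pairs is a subset of the original's. Hence the pointwise containments above assemble into inclusions of simplicial complexes $R(X,K_1)_\alpha \hookrightarrow R(X,K_2)_\alpha \hookrightarrow R_\alpha$ for each $\alpha$, and an inclusion of subcomplexes is automatically a chain map on the associated chain modules over any coefficient ring. These inclusions are compatible with the internal filtration maps (themselves inclusions induced by increasing $\alpha$), because all the maps in sight are restrictions of the identity on $R$; this is exactly the statement that we have a monomorphism of $[0,\infty)$-filtered modules, natural with respect to further enlargements of $K$.

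Finally, the inequality $\rho_V(Y)\le \rho_{K_2}(Y)\le \rho_{K_1}(Y)$ is the order-theoretic shadow of these module inclusions. For a chain $Y$, set $A_K = \{\alpha : Y\in R(X,K)_\alpha\}$; membership in a Rips filtration is upward closed in $\alpha$, so each $A_K$ is an up-set, and the containments of complexes give $A_{K_1}\subseteq A_{K_2}\subseteq A_V$. Taking infima and using the convention $\min\emptyset=\infty$ turns these inclusions into the stated chain of inequalities. I do not anticipate a genuine obstacle: the whole argument is a monotonicity bookkeeping exercise, and the only point worth stating carefully is the $\min\emptyset=\infty$ convention, which is precisely what makes the inequalities hold even when $Y$ is supported on vertices whose pairwise $\alpha$-balls never meet $K_1$ for any finite $\alpha$.
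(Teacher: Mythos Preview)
Your argument is correct. The paper treats this lemma as essentially self-evident, offering only the one-line remark that ``geodesic lengths in $K_1$ are never shorter than geodesic lengths in $K_2$, and neither is shorter than geodesic lengths in $V$.'' You instead work directly from the ball-intersection definition of $R(X,K)_\alpha$, observing that $K_1\subset K_2\subset V$ forces the chain of inclusions $B_\alpha(x_i)\cap B_\alpha(x_j)\cap K_1 \subseteq B_\alpha(x_i)\cap B_\alpha(x_j)\cap K_2 \subseteq B_\alpha(x_i)\cap B_\alpha(x_j)$, and then carefully promote this to the filtered-module statement and the $\rho$ inequalities via up-sets. This is the same monotonicity content, just unpacked: the paper's geodesic phrasing is the geometric intuition (a midpoint in $K_1$ is automatically a midpoint in $K_2$), while your set-theoretic version is the literal verification against the paper's own definition. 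Your treatment has the advantage of being self-contained and of making explicit why the chain-level inclusion is a map of filtered modules; the paper's version has the advantage of flagging the geodesic-distortion viewpoint that drives the rest of the paper.
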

The previous lemma is simply because geodesic lengths in $K_1$ are never shorter than geodesic lengths in $K_2$, and neither is shorter than geodesic lengths in $V$. 
Our approach to the question ``does the geometry of K match the geometry of X?'' relies on detecting the inequality $\rho_{V}(Y) < \rho_{K}(Y)$.

\begin{defn}[Parallax Complex]
For $K \in \M(X)$, let $P(X,K,V)$ denote the subcomplex of $R$ defined for each real pair $(\alpha, \varepsilon)$ by 
\begin{equation}
    P(X,K,V)_{\alpha, \varepsilon} = \{ Y \in R~:~ \rho_{K}(Y) \leq \alpha, \ \rho_V(Y) \leq \rho_{K}(Y) \leq \rho_V(Y) + \varepsilon \}.
\end{equation}
When $X,K,V$ are understood in context, we abbreviate $P=P(X,K,V)$.
\label{def:parallax}
\end{defn} 
The parameter $\varepsilon$ measures the distortion of geodesic length in $K$ versus $V$.
The next few lemmas are immediate consequences of the definition.

\begin{lemma}[Parallax is Bi-Filtered]
  If $\alpha \leq \alpha'$, then 
$P_{\alpha,\varepsilon} \subset P_{\alpha',\varepsilon}$. 
If $\varepsilon < \varepsilon'$, then 
$P_{\alpha,\varepsilon} \subset P_{\alpha,\varepsilon'}$.\label{lemma:bifiltered}
\end{lemma}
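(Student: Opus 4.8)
The plan is to prove both inclusions by unwinding Definition~\ref{def:parallax} directly. Fix $K \in \M(X)$ and an arbitrary simplex $Y \in R$, and observe that membership $Y \in P_{\alpha,\varepsilon}$ is equivalent to the conjunction of three scalar conditions, namely $\rho_V(Y) \le \rho_K(Y)$, $\rho_K(Y) \le \alpha$, and $\rho_K(Y) \le \rho_V(Y) + \varepsilon$. The quantities $\rho_V(Y)$ and $\rho_K(Y)$ depend only on $X$, $K$, $V$, and $Y$; they do not depend on $\alpha$ or $\varepsilon$. So the whole argument reduces to tracking which of the three inequalities is touched when we enlarge a parameter, and checking it is preserved.

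For the first claim, assume $\alpha \le \alpha'$ and let $Y \in P_{\alpha,\varepsilon}$. The only one of the three conditions that mentions $\alpha$ is $\rho_K(Y) \le \alpha$, and from $\rho_K(Y) \le \alpha \le \alpha'$ we get $\rho_K(Y) \le \alpha'$. The conditions $\rho_V(Y) \le \rho_K(Y)$ and $\rho_K(Y) \le \rho_V(Y) + \varepsilon$ hold verbatim, so $Y \in P_{\alpha',\varepsilon}$, giving $P_{\alpha,\varepsilon} \subset P_{\alpha',\varepsilon}$.

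For the second claim, assume $\varepsilon < \varepsilon'$ and let $Y \in P_{\alpha,\varepsilon}$. The only condition mentioning $\varepsilon$ is $\rho_K(Y) \le \rho_V(Y) + \varepsilon$. Note first that $Y \in P_{\alpha,\varepsilon}$ forces $\rho_V(Y) \le \rho_K(Y) \le \alpha$, so $\rho_V(Y)$ is a finite real number and $\rho_V(Y) + \varepsilon$ behaves monotonically in $\varepsilon$; hence $\rho_K(Y) \le \rho_V(Y) + \varepsilon \le \rho_V(Y) + \varepsilon'$. The other two conditions are untouched, so $Y \in P_{\alpha,\varepsilon'}$ and $P_{\alpha,\varepsilon} \subset P_{\alpha,\varepsilon'}$.

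This is a purely order-theoretic computation, so there is no real obstacle; the only point needing a moment's care is confirming that $\rho_V(Y)$ is finite before adding $\varepsilon$, which I handle by first invoking the defining inequality $\rho_V(Y) \le \rho_K(Y) \le \alpha$. (Should one prefer to admit $\infty$ with the usual convention $\min\emptyset = \infty$, the inequality still persists since $\infty + \varepsilon = \infty + \varepsilon'$, so nothing is lost.) No use is made of the simplicial structure of $R$ or of how $\rho_K,\rho_V$ behave on faces; only the three scalar inequalities matter.
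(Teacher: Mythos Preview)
Your proof is correct and matches the paper's approach: the paper simply states that this lemma is an immediate consequence of Definition~\ref{def:parallax} and gives no further argument, and your direct unwinding of the three scalar inequalities is exactly that immediate verification.
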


\begin{lemma}
For all $\alpha, \varepsilon$, we have $P_{\alpha,\varepsilon} \subset R(X,K)_\alpha \subset R_\alpha$.
\label{lem:inclusion}
\end{lemma}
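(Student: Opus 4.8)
The plan is simply to unwind the definitions, since both inclusions are tautological once one recalls the meaning of the symbols. For any chain $Y \in R$, the scale $\rho_{K}(Y) = \min\{\alpha : Y \in R(X,K)_\alpha\}$ is the least $\alpha$ at which $Y$ enters the $K$-Rips filtration, with $\rho_{K}(Y) = \infty$ when $Y$ never enters (using the stated convention $\min\emptyset = \infty$). Since the filtration $\{R(X,K)_\alpha\}_\alpha$ is monotone in $\alpha$ and the minimum is attained, one has the equivalence $Y \in R(X,K)_\alpha \iff \rho_{K}(Y) \le \alpha$, and analogously $Y \in R_\alpha \iff \rho_{V}(Y) \le \alpha$.

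For the first inclusion $P_{\alpha,\varepsilon} \subset R(X,K)_\alpha$, take $Y \in P_{\alpha,\varepsilon}$. By Definition~\ref{def:parallax}, membership in $P_{\alpha,\varepsilon}$ requires in particular $\rho_{K}(Y) \le \alpha$, and by the equivalence above this says exactly that $Y \in R(X,K)_\alpha$. For the second inclusion $R(X,K)_\alpha \subset R_\alpha$, this is precisely the right-hand containment already recorded in the first lemma of this section (take $K_1 = K$ and $K_2 = V$); concretely, any point realizing $B_\alpha(x_i)\cap B_\alpha(x_j)\cap K \ne \emptyset$ also realizes $B_\alpha(x_i)\cap B_\alpha(x_j)\cap V \ne \emptyset$ because $K \subseteq V$, so every simplex of $R(X,K)_\alpha$ is a simplex of $R_\alpha$. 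Composing the two gives the claimed chain $P_{\alpha,\varepsilon} \subset R(X,K)_\alpha \subset R_\alpha$.

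I do not anticipate any genuine obstacle. The only point worth a moment's attention is consistency with the $\min\emptyset = \infty$ convention: if $Y$ lies in no $R(X,K)_\alpha$ for finite $\alpha$, then $\rho_{K}(Y) = \infty > \alpha$, so $Y \notin P_{\alpha,\varepsilon}$ as well, and nothing is erroneously included. One can also sidestep the balls entirely for the second inclusion: the condition $\rho_{V}(Y) \le \rho_{K}(Y)$ that is built into the definition of $P_{\alpha,\varepsilon}$, combined with $\rho_{K}(Y) \le \alpha$, yields the inequality chain $\rho_{V}(Y) \le \rho_{K}(Y) \le \alpha$, hence $Y \in R_\alpha$ directly. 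Thus the lemma is immediate, consistent with the paper's framing of it as a direct consequence of Definition~\ref{def:parallax}.
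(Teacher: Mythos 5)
Your proof is correct and matches the paper's treatment: the paper offers no explicit argument, stating only that this lemma is an ``immediate consequence of the definition,'' and your unwinding of $\rho_K(Y)\le\alpha$ together with the earlier lemma (or equivalently the chain $\rho_V(Y)\le\rho_K(Y)\le\alpha$) is exactly that immediate argument. Nothing further is needed.
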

Let $\iota:P_{\alpha,\varepsilon} \to R_\alpha$ denote the inclusion of complexes, and  
let $\iota_*:HP_{\alpha,\varepsilon} \to HR_\alpha$ denote the induced homomorphism on homology \cite{hatcher_algebraic_2001}.

\begin{cor}[Homology Deaths are later in Parallax]
Suppose that $[Y]$ is a class in $HP_{\alpha_1,\varepsilon_1}$ such that the bi-transition map $HP_{\alpha_1, \varepsilon_1} \to HP_{\alpha_2,\varepsilon_2}$ takes $[Y]\mapsto [0]$.
Then there exists $t \leq \alpha_2$ such that the transition map $HR_{\alpha_1} \to HR_t$ satisfies 
$\iota_*([Y]) \mapsto [0]$.
\label{cor:Hdeath}
\end{cor}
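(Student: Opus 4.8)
The plan is a direct diagram chase whose only content is that a cycle which bounds in the parallax complex must already bound in the ambient Rips complex at a filtration level no larger than the one witnessing the death in parallax. This is forced by Definition~\ref{def:parallax}: every chain $W \in P_{\alpha,\varepsilon}$ satisfies $\rho_{V}(W) \leq \rho_{K}(W) \leq \alpha$, so $W$ lives in $R_{\rho_V(W)} \subseteq R_\alpha$, which is a quantitative refinement of the inclusion $P_{\alpha,\varepsilon} \subset R_\alpha$ from Lemma~\ref{lem:inclusion}.

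First I would fix a cycle $Y \in P_{\alpha_1,\varepsilon_1}$ representing $[Y]$; Definition~\ref{def:parallax} gives $\rho_{V}(Y) \leq \rho_{K}(Y) \leq \alpha_1$, so $\iota(Y) \in R_{\alpha_1}$ and $\iota_*([Y]) \in HR_{\alpha_1}$ is well defined. The hypothesis implicitly requires $\alpha_1 \leq \alpha_2$ and $\varepsilon_1 \leq \varepsilon_2$, and the bi-transition map $HP_{\alpha_1,\varepsilon_1} \to HP_{\alpha_2,\varepsilon_2}$ is induced by the subcomplex inclusion of Lemma~\ref{lemma:bifiltered}. Next, unwind ``$[Y] \mapsto [0]$'': there is a chain $W \in P_{\alpha_2,\varepsilon_2}$ with $\partial W = Y$ (identifying $Y$ with its image under $P_{\alpha_1,\varepsilon_1} \hookrightarrow P_{\alpha_2,\varepsilon_2}$). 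Applying Definition~\ref{def:parallax} to $W$ yields $\rho_{V}(W) \leq \rho_{K}(W) \leq \alpha_2$. Set $t \defeq \max\{\alpha_1, \rho_{V}(W)\}$, so that $\alpha_1 \leq t \leq \alpha_2$ and the transition map $HR_{\alpha_1} \to HR_t$ is defined; since $\rho_{V}(W) \leq t$ we have $W \in R_t$ with $\partial W = \iota(Y)$ as chains there, so the image of $\iota_*([Y])$ in $HR_t$ is $[\partial W] = 0$.

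The only points requiring a sentence of care are functoriality — that the square relating $\iota$ to the two filtration transition maps commutes, which is immediate because all four maps are induced by inclusions of chain complexes — and the degenerate case $\rho_{V}(W) < \alpha_1$, which is absorbed by the $\max$ in the definition of $t$ (there $W \in R_{\alpha_1}$ already, so $t = \alpha_1$). I do not anticipate a genuine obstacle: the corollary is essentially the observation that passing from the parallax filtration to the ambient Rips filtration can only move a death earlier, never later.
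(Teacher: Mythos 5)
Your proposal is correct and follows essentially the same route as the paper's proof: an elementary filtration argument that pushes the bounding chain witnessing the death in $P_{\alpha_2,\varepsilon_2}$ through the inclusion of Lemma~\ref{lem:inclusion} into $R_{\alpha_2}$, so the class must already bound there. Your unwinding of ``$[Y]\mapsto[0]$'' and the choice $t=\max\{\alpha_1,\rho_V(W)\}$ are, if anything, a slightly cleaner and sharper rendering of the same argument (the paper simply takes $t=\alpha_2$, or $t=\alpha_1$ in the degenerate case).
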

\begin{lemma}
For all $\varepsilon \geq \alpha$, we have $P_{\alpha,\varepsilon} = P_{\alpha,\infty} = R(X,K)_\alpha \subset R_\alpha$.\label{lem:diag}
\end{lemma}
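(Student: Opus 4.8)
The plan is to show that the hypothesis $\varepsilon \geq \alpha$ renders both inequalities other than ``$\rho_K(Y) \leq \alpha$'' in Definition~\ref{def:parallax} vacuous, so that $P_{\alpha,\varepsilon}$ collapses to $R(X,K)_\alpha$, with no real obstacle to overcome beyond careful bookkeeping.

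First I would record the trivial bound $\rho_V(Y) \geq 0$ for every chain $Y \in R$, which holds because the Rips parameter ranges over $[0,\infty)$. Combined with the hypothesis, any $Y$ with $\rho_K(Y) \leq \alpha$ satisfies
\[
\rho_K(Y) \;\leq\; \alpha \;\leq\; \varepsilon \;\leq\; \rho_V(Y) + \varepsilon ,
\]
so the defining condition ``$\rho_K(Y) \leq \rho_V(Y) + \varepsilon$'' is automatic and can be dropped. Hence
\[
P_{\alpha,\varepsilon} \;=\; \{\, Y \in R \;:\; \rho_V(Y) \leq \rho_K(Y) \leq \alpha \,\},
\]
an expression in which $\varepsilon$ no longer appears; in particular it coincides with its $\varepsilon = \infty$ instance $P_{\alpha,\infty}$ (and the same reasoning applies verbatim to $\varepsilon=\infty$ since $\infty \geq \alpha$).

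Next I would invoke the (unnumbered) lemma stated just before Definition~\ref{def:parallax}: because $\rho_V(Y) \leq \rho_K(Y)$ holds for \emph{every} $Y$, the first inequality in the displayed set-description is also vacuous, leaving $P_{\alpha,\infty} = \{\, Y \in R : \rho_K(Y) \leq \alpha \,\}$. Finally I would identify this with $R(X,K)_\alpha$ using the definition $\rho_K(Y) = \min\{\beta : Y \in R(X,K)_\beta\}$ together with the monotonicity of the Rips filtration in $\alpha$: if $Y \in R(X,K)_\alpha$ then $\rho_K(Y) \leq \alpha$ directly, and conversely $\rho_K(Y) \leq \alpha$ yields some $\beta \leq \alpha$ with $Y \in R(X,K)_\beta \subseteq R(X,K)_\alpha$. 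The remaining inclusion $R(X,K)_\alpha \subseteq R_\alpha$ is exactly (part of) Lemma~\ref{lem:inclusion}.

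I do not expect a genuine difficulty here; the statement is a direct consequence of unwinding Definition~\ref{def:parallax}. The only point that warrants a moment's care is the converse direction of the last equality — that the non-strict inequality $\rho_K(Y) \leq \alpha$ (rather than $<\alpha$) still forces $Y \in R(X,K)_\alpha$ — which is precisely where one uses that the $\min$ defining $\rho_K$ is attained; this follows from the conventions already fixed in the paper (closed balls $B_\alpha$ and closed $K = \overline{K^\circ}$, so the witnessing intersection at the infimal parameter is nonempty by a compactness argument), and I would simply cite that convention rather than belabor it.
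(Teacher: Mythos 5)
Your argument is correct and matches the paper's (implicit) reasoning: the paper offers no written proof, labeling Lemma~\ref{lem:diag} an ``immediate consequence of the definition,'' and your unwinding --- that $\rho_V(Y)\geq 0$ makes the $\varepsilon$-condition vacuous when $\varepsilon\geq\alpha$, that $\rho_V(Y)\leq\rho_K(Y)$ always holds, and that what remains is exactly $R(X,K)_\alpha$, included in $R_\alpha$ by Lemma~\ref{lem:inclusion} --- is precisely that intended argument made explicit. Your closing remark about attainment of the $\min$ defining $\rho_K$ is a reasonable nicety but not something the paper belabors either.
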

The proofs of \ref{cor:Hdeath} and \ref{lem:diag} are given in the Supplemental Material.

It is sometimes useful to create single-parameter filtrations through $P$,
parameterized by Rips radius, for the purpose of computing barcodes and persistence diagrams.
\begin{defn}[Rips-like Path]
A \emph{Rips-like path} is a filtered module $L_{\alpha} = P_{\alpha,\varepsilon(\alpha)}$ for $0 \leq \alpha < \infty$ such that $\varepsilon(\alpha)$ is a non-decreasing function satisfying $\varepsilon(0)=0$.
\end{defn}
A Rips-like path has homology $HL$ and a barcode or persistence diagram.
By Lemma~\ref{lem:diag}, one Rips-like path is $R(X,K)_\alpha = P_{\alpha,\alpha}$.
Another is the ``inflexible'' path $L_\alpha = P_{\alpha,0}$.

\section{Perturbation}
\label{sec:perturbation}
This section establishes lemmas that ensure 
Parallax and its consequences (notably Theorem~\ref{thm:stability}) are stable under certain types of perturbations,
which means that the parallax is reasonable in the presence of noise.

\begin{defn}[Pointwise Perturbation]
Given $X \in \M^*(V)$, a pointwise $\kappa$-perturbation is $X'\in \M^*(V)$ such that
the sets $X$ and $X'$ admit a one-to-one correspondence $f:X \to X'$ satisfying $\|f(x) - x\| \leq \kappa$.
We write $f:X \overset{\kappa}{\approx} X'$.
\end{defn}

\begin{defn}[Pointwise $K$-perturbation]
Suppose $f: X \overset{\kappa}{\approx} X'$ such that each $(x,f(x))$ pair is connected by a $K$-geodesic of length $\leq \kappa$.
We write $f: X\overset{\kappa}{\approx}_K X'$.
\end{defn}
Note that any $f: X\overset{\kappa}{\approx} X'$ is an isomorphism in the category $\M^*(V)$, and 
any $f: X\overset{\kappa}{\approx}_K X'$ is an isomorphism in the category $\M^*(K)$.
The relation $\overset{\kappa}{\approx}$ is reflexive and symmetric, but not transitive; hence, it is a way of describing proximity but does not provide an equivalence relation.
Of course, $X'\overset{\kappa}{\approx}X$ implies the Hausdorff distance satisfies $d_H(X,X') \leq \kappa$.
A pointwise $\kappa$-perturbation can cause length distortions by $2 \kappa$, as said formally in the following lemma.

\begin{lemma}[Data Perturbation Lemma]
  \label{lem:data-perturbation}
  If $f:X \overset{\kappa}{\approx} X'$, then the Rips complexes identified via $f$ admit a $\kappa$-interleaving 
\[
  \begin{tikzcd}[column sep=small]
    \cdots \ar[r] &
    R(X',V)_{\alpha - \kappa}
    \ar[r,"f_{\sharp}^{-1}"]
    &
    R(X,V)_{\alpha} 
    \ar[r,"f_{\sharp}"]
    &
    R(X',V)_{\alpha + \kappa}
    \ar[r,"f_{\sharp}^{-1}"]
    &
    R(X,V)_{\alpha + 2\kappa}
    \ar[r]
    &
    \cdots
  \end{tikzcd}
\]
Specifically, for any edge $e=(x_i,x_j) \in R(X,V)_\alpha$
defined by the existence of a $V$-geodesic of length $2\alpha$, the edge
$f_{\sharp}(e) = (f(x_i), f(x_j))$ in $R(X',V)$ has length $2\alpha'$ satisfying
$2\alpha - 2\kappa \leq 2\alpha' \leq 2\alpha + 2\kappa$.

Moreover, the same holds when replacing $V$ with $K$, under the assumption $X \overset{\kappa}{\approx}_K X'$. 
\end{lemma}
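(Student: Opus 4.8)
The plan is to reduce everything to two applications of the triangle inequality together with the flag (clique) structure of the Rips complex, so that only edges need genuine attention. Recall that an edge $e=(x_i,x_j)$ lies in $R(X,V)_\alpha$ precisely when there is a witness point $z\in V$ with $d_V(x_i,z)\le\alpha$ and $d_V(x_j,z)\le\alpha$ (I write $d_V$ for the geodesic metric; the symbol $\|\cdot\|$ in the hypothesis of a pointwise $\kappa$-perturbation is $d_V$ in the general geodesic setting, and the Rips parameter $\alpha$ is half the geodesic length, matching the unlabeled lemma about $\rho$). \textbf{Step 1 (edges).} Fix such an $e$ and its witness $z$. Since $d_V(x_i,f(x_i))\le\kappa$ and $d_V(x_j,f(x_j))\le\kappa$, prepending the short geodesics from $f(x_i)$ to $x_i$ and from $f(x_j)$ to $x_j$ shows that the \emph{same} point $z$ satisfies $d_V(f(x_i),z)\le\alpha+\kappa$ and $d_V(f(x_j),z)\le\alpha+\kappa$; hence $f_\sharp(e)=(f(x_i),f(x_j))\in R(X',V)_{\alpha+\kappa}$, giving $\rho_V(f_\sharp e)\le\rho_V(e)+\kappa$, i.e. $2\alpha'\le 2\alpha+2\kappa$. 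The inverse bijection $f^{-1}:X'\to X$ is again a pointwise $\kappa$-perturbation, since $\|f^{-1}(x')-x'\|=\|x-f(x)\|\le\kappa$; running the identical witness-reuse argument for it yields $\rho_V(e)\le\rho_V(f_\sharp e)+\kappa$, i.e. $2\alpha-2\kappa\le 2\alpha'$. This is exactly the asserted edge estimate.

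\textbf{Step 2 (all simplices and the maps).} Because $R(X,V)$ is a flag complex, a simplex $(x_0,\dots,x_d)$ lies in $R_d(X,V)_\alpha$ iff every edge among its vertices lies in $R_1(X,V)_\alpha$. By Step 1 each edge of $f_\sharp(x_0,\dots,x_d)$ then lies in $R_1(X',V)_{\alpha+\kappa}$, so $f_\sharp$ restricts to a simplicial map $R(X,V)_\alpha\to R(X',V)_{\alpha+\kappa}$; since $f$ is a vertex bijection no simplex is collapsed, and these maps commute with the filtration inclusions in $\alpha$, so they constitute a degree-$\kappa$ morphism of persistence modules. Symmetrically $f_\sharp^{-1}:=(f^{-1})_\sharp$ is a degree-$\kappa$ morphism $R(X',V)\to R(X,V)$. \textbf{Step 3 (interleaving).} On vertices $f^{-1}\circ f=\mathrm{id}_X$ and $f\circ f^{-1}=\mathrm{id}_{X'}$, so the composites $f_\sharp^{-1}\circ f_\sharp:R(X,V)_\alpha\to R(X,V)_{\alpha+2\kappa}$ and $f_\sharp\circ f_\sharp^{-1}:R(X',V)_{\alpha-\kappa}\to R(X',V)_{\alpha+\kappa}$ are the canonical filtration inclusions. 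That is precisely the defining data of a $\kappa$-interleaving, and it unwinds exactly into the zigzag displayed in the statement.

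\textbf{Step 4 (the ``moreover'').} The only property of the perturbation used in Steps 1--3 is the displacement bound $d_V(x,f(x))\le\kappa$. Under the hypothesis $f:X\overset{\kappa}{\approx}_K X'$ we are handed, for each pair, a $K$-geodesic from $x$ to $f(x)$ of length $\le\kappa$, so replacing $d_V$ by the intrinsic metric $d_K$ and the witness $z\in V$ by the witness $z\in K$ lets the same concatenation argument go through verbatim; $f^{-1}$ is again a $K$-perturbation (reverse the same geodesics), so the full statement holds with $K$ in place of $V$.

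\textbf{Where the work is.} None of this is deep; the only point that deserves emphasis is \emph{why} the $K$-version requires the stronger relation $\overset{\kappa}{\approx}_K$ rather than $\overset{\kappa}{\approx}$: a perturbation that is tiny in $V$ can nonetheless carry $x$ to the far side of a void of $K$, so the straight $V$-segment from $x$ to $f(x)$ need not stay in $K$ and $d_K(x,f(x))$ can be arbitrarily large; assuming the displacement is realized by a short $K$-path is exactly what legitimizes the witness-reuse step for $R(X,K)$. The only other thing to verify with care is that the two composite triangles in Step 3 commute on the nose (not merely up to homotopy), which holds because both composites are induced by the identity vertex map.
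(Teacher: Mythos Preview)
Your proof is correct and follows essentially the same idea as the paper's: the edge estimate is a triangle-inequality (worst-case displacement) argument, and everything else follows from the flag structure of the Rips complex. The paper's proof is two sentences---it notes the ball/geodesic correspondence and observes that the extremal perturbation moves $x_i$ and $x_j$ each by $\kappa$ in opposite directions along their geodesic---whereas you spell out the witness-reuse form of the triangle inequality, the extension to higher simplices, the on-the-nose commutativity of the interleaving triangles, and the $K$-case; these are welcome details but not a different method.
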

\begin{proof}
Note that a geodesic of length $2\alpha$ corresponds with the intersection of two balls of radius $\alpha$; hence, the factor of 2.
The worst-case perturbation is to move each of $x_i$ and $x_j$ by $\kappa$ in opposite directions, away from each-other, along their geodesic.
\end{proof}

\begin{lemma}[Parallax Interleaving Lemma]
\label{lem:parallax-interleaving}
  Suppose $f:X \overset{\kappa}{\approx}_K X'$.
  Let $P=P(X,K,V)$ and $P'=P(X',K,V)$. For any $\alpha,\varepsilon$, these 
  parallax complexes admit a $(\kappa,2\kappa)$-interleaving
  \[
    \begin{tikzcd}
      \cdots \ar[r] &
      P_{\alpha,\varepsilon}
      \ar[r,"f_{\sharp}"]
      &
      P'_{\alpha+\kappa, \varepsilon + 2\kappa}
      \ar[r,"f_{\sharp}^{-1}"]
      &
      P_{\alpha+2\kappa, \varepsilon + 4\kappa}
      \ar[r]
      &
      \cdots
    \end{tikzcd}
    \]
\end{lemma}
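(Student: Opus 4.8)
The plan is to build the $(\kappa,2\kappa)$-interleaving of the parallax complexes out of the two $\kappa$-interleavings already granted by Lemma~\ref{lem:data-perturbation}: one for the ambient Rips complex $R(\cdot,V)$ and one for $R(\cdot,K)$. Recall that $P_{\alpha,\varepsilon}$ is carved out of $R=R(X,V)$ by the three conditions $\rho_V(Y)\le\rho_K(Y)$, $\rho_K(Y)\le\alpha$, and $\rho_K(Y)-\rho_V(Y)\le\varepsilon$. So a simplex $Y$ lies in $P_{\alpha,\varepsilon}$ exactly when it is a valid chain in $R(X,K)_\alpha$ and its $K$-versus-$V$ geodesic distortion is at most $\varepsilon$. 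The map $f_\sharp$ is the simplicial map induced by the bijection $f:X\to X'$; the content of the lemma is that $f_\sharp$ carries $P_{\alpha,\varepsilon}$ into $P'_{\alpha+\kappa,\varepsilon+2\kappa}$, that $f_\sharp^{-1}$ carries $P'_{\alpha+\kappa,\varepsilon+2\kappa}$ into $P_{\alpha+2\kappa,\varepsilon+4\kappa}$, and that the composite is the inclusion $P_{\alpha,\varepsilon}\hookrightarrow P_{\alpha+2\kappa,\varepsilon+4\kappa}$ supplied by Lemma~\ref{lemma:bifiltered} (which makes the interleaving diagram commute automatically, since at the chain level all these maps are restrictions of $f_\sharp$ and its inverse on $R$).

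The core estimate I would carry out is a simplex-by-simplex bound. Fix $Y\in P_{\alpha,\varepsilon}$, write $\rho_V=\rho_V(Y)$, $\rho_K=\rho_K(Y)$, and set $Y'=f_\sharp(Y)$ with $\rho_V'=\rho_V(Y')$, $\rho_K'=\rho_K(Y')$. The hypothesis $f:X\overset{\kappa}{\approx}_K X'$ gives that $f$ is simultaneously a $\kappa$-perturbation in the sense needed for the $V$-interleaving and a $K$-perturbation in the sense needed for the $K$-interleaving, so by the edge-length analysis in Lemma~\ref{lem:data-perturbation} (applied to the edge of $Y$ or $Y'$ realizing $\rho_V$, resp.\ $\rho_K$) we get $|\rho_V'-\rho_V|\le\kappa$ and $|\rho_K'-\rho_K|\le\kappa$. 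From $\rho_K\le\alpha$ we then get $\rho_K'\le\alpha+\kappa$. For the first filtration direction of $P'$, the condition $\rho_V'\le\rho_K'$ is just that $Y'$ is a genuine chain in $R(X',K)$, which holds because $Y$ being a chain in $R(X,K)$ means the relevant pairwise ball-intersections meet $K$, and $f$ maps those witnesses along short $K$-geodesics; more cleanly, $\rho_V'\le\rho_K'$ always holds by the inclusion $R(X',K)\subset R(X',V)$ of Lemma~\ref{lem:inclusion} whenever $\rho_K'<\infty$, and here $\rho_K'\le\alpha+\kappa<\infty$. Finally, for the distortion bound, $\rho_K'-\rho_V' \le (\rho_K+\kappa)-(\rho_V-\kappa) = (\rho_K-\rho_V)+2\kappa \le \varepsilon+2\kappa$. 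This establishes $Y'\in P'_{\alpha+\kappa,\varepsilon+2\kappa}$, i.e.\ $f_\sharp(P_{\alpha,\varepsilon})\subset P'_{\alpha+\kappa,\varepsilon+2\kappa}$. The reverse map is identical with the roles of $X,X'$ swapped and one more $\kappa$-step added to each parameter, so $f_\sharp^{-1}(P'_{\alpha+\kappa,\varepsilon+2\kappa})\subset P_{\alpha+2\kappa,\varepsilon+4\kappa}$.

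It remains to observe commutativity of the interleaving diagram. Every arrow in the diagram is, at the level of chains in the ambient $R$ (identified across $X$ and $X'$ via $f$), either $f_\sharp$ or $f_\sharp^{-1}$, and the subcomplexes $P_{\alpha,\varepsilon}$, $P'_{\alpha+\kappa,\varepsilon+2\kappa}$, $P_{\alpha+2\kappa,\varepsilon+4\kappa}$ are nested by Lemma~\ref{lemma:bifiltered} and the inclusions of the previous paragraph; hence the composite $f_\sharp^{-1}\circ f_\sharp$ is the identity on $R$ and restricts to the canonical filtration inclusion $P_{\alpha,\varepsilon}\hookrightarrow P_{\alpha+2\kappa,\varepsilon+4\kappa}$, and symmetrically on the $P'$ side. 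That is exactly the data of a $(\kappa,2\kappa)$-interleaving.

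The main obstacle I anticipate is not the arithmetic on $(\alpha,\varepsilon)$ — that is the straightforward triangle-inequality bookkeeping above — but rather being careful that the single map $f$ legitimately plays both roles at once: the $\kappa$-perturbation bound for the $V$-Rips complex needs only $\|f(x)-x\|\le\kappa$, whereas the bound for the $K$-Rips complex needs the stronger hypothesis that $(x,f(x))$ is joined by a $K$-geodesic of length $\le\kappa$, which is precisely what the notation $f:X\overset{\kappa}{\approx}_K X'$ packages. One must also check that $f_\sharp$ is well-defined as a simplicial map on the relevant subcomplexes (it is, since it is just relabeling vertices by the bijection $f$, and we have just shown it preserves membership), and that the edge realizing a simplex's $\rho_K$ is itself an edge of that simplex so that Lemma~\ref{lem:data-perturbation}'s edgewise estimate applies to $\rho_K(Y)$ and not merely to individual edges — this holds because $\rho_K(Y)=\max_{e\subset Y}\rho_K(e)$ by definition of the Rips filtration, and the maximum is attained on some edge whose image under $f_\sharp$ has $\rho_K$-value within $\kappa$.
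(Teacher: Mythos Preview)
Your proposal is correct and follows essentially the same route as the paper: apply the Data Perturbation Lemma~\ref{lem:data-perturbation} twice (once in $V$, once in $K$, the latter enabled by the $\overset{\kappa}{\approx}_K$ hypothesis) to get $|\rho_V'-\rho_V|\le\kappa$ and $\rho_K'\le\rho_K+\kappa$, then combine to obtain $\rho_K'\le\alpha+\kappa$ and $\rho_K'-\rho_V'\le\varepsilon+2\kappa$. The paper's proof works edge-by-edge and leaves the simplex reduction and interleaving commutativity implicit; you spell these out explicitly, which is fine and arguably more complete.
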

\begin{cor}[Parallax Interleaving Lemma, Homology Version]
\label{cor:parallax-interleaving-hom}
Suppose $f:X \overset{\kappa}{\approx}_K X'$.
Let $HP=HP(X,K,V)$ and $HP'=HP(X',K,V)$. For any $\alpha,\varepsilon$, these homology groups 
admit a $(\kappa,2\kappa)$-interleaving 
  \(
    \begin{tikzcd}
      \cdots \ar[r] &
      HP_{\alpha,\epsilon}
      \ar[r,"f_*"]
      &
      HP'_{\alpha+\kappa, \epsilon + 2\kappa}
      \ar[r,"(f^{-1})_*"]
      &
      HP_{\alpha+2\kappa,\epsilon+4\kappa}
      \ar[r,"f_*"]
      &
      \cdots
    \end{tikzcd}
  \)
\end{cor}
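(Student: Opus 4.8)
The plan is to derive this corollary as a purely formal consequence of the chain-level Lemma~\ref{lem:parallax-interleaving} by applying the homology functor, so that all of the real work was already done there. Since $f\colon X\overset{\kappa}{\approx}_K X'$ is in particular a bijection, it induces a simplicial isomorphism $f_{\sharp}\colon R(X,V)\to R(X',V)$ of the ambient Rips complexes, with inverse $f_{\sharp}^{-1}$ induced by $f^{-1}$. Lemma~\ref{lem:parallax-interleaving} records two facts about the restrictions of this isomorphism: (i) $f_{\sharp}$ carries $P_{\alpha,\varepsilon}$ into $P'_{\alpha+\kappa,\varepsilon+2\kappa}$ and $f_{\sharp}^{-1}$ carries $P'_{\alpha+\kappa,\varepsilon+2\kappa}$ into $P_{\alpha+2\kappa,\varepsilon+4\kappa}$ (this is exactly where the hypothesis $\overset{\kappa}{\approx}_K$, rather than merely $\overset{\kappa}{\approx}$, is used, in order to bound the distortion of $\rho_K$ by $2\kappa$); and (ii) because $f_{\sharp}$ and $f_{\sharp}^{-1}$ are restrictions of a single ambient isomorphism and its inverse, the two composites $f_{\sharp}^{-1}\circ f_{\sharp}$ and $f_{\sharp}\circ f_{\sharp}^{-1}$ are precisely the internal bi-filtration inclusions $P_{\alpha,\varepsilon}\hookrightarrow P_{\alpha+2\kappa,\varepsilon+4\kappa}$ and $P'_{\alpha,\varepsilon}\hookrightarrow P'_{\alpha+2\kappa,\varepsilon+4\kappa}$ from Lemma~\ref{lemma:bifiltered}.

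First I would apply the homology functor to obtain the linear maps $f_*\colon HP_{\alpha,\varepsilon}\to HP'_{\alpha+\kappa,\varepsilon+2\kappa}$ and $(f^{-1})_*\colon HP'_{\alpha+\kappa,\varepsilon+2\kappa}\to HP_{\alpha+2\kappa,\varepsilon+4\kappa}$ appearing in the statement. Next I would check that $f_*$ and $(f^{-1})_*$ are morphisms of bi-filtered persistence modules in the sense of \cite{botnan_introduction_2023}, i.e. that they commute with the internal transition maps in both the $\alpha$ and the $\varepsilon$ directions; this is immediate, since those transition maps are induced by subcomplex inclusions and $f_{\sharp}$ already commutes with all such inclusions at the chain level. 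Finally, applying homology to fact (ii) and using functoriality $H(g\circ h)=H(g)\circ H(h)$ gives that $(f^{-1})_*\circ f_*$ is the transition map $HP_{\alpha,\varepsilon}\to HP_{\alpha+2\kappa,\varepsilon+4\kappa}$ and $f_*\circ(f^{-1})_*$ is the transition map $HP'_{\alpha,\varepsilon}\to HP'_{\alpha+2\kappa,\varepsilon+4\kappa}$. These two identities are precisely the content of a $(\kappa,2\kappa)$-interleaving, which is what the displayed diagram asserts.

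I do not expect a genuine obstacle here: every geometric input is packaged into Lemma~\ref{lem:parallax-interleaving}, and the passage to homology is pure functoriality. The only point requiring care is bookkeeping in the two-parameter setting — keeping the shift vector $(\kappa,2\kappa)$ applied consistently and verifying naturality with respect to \emph{both} filtration directions, not just the Rips radius $\alpha$ — but, as noted, naturality already holds at the chain level because all the relevant maps are restrictions of the one ambient isomorphism $f_{\sharp}\colon R(X,V)\to R(X',V)$.
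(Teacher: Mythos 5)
Your proposal is correct and matches the paper's route: the paper simply states that Corollary~\ref{cor:parallax-interleaving-hom} ``follows functorially'' from Lemma~\ref{lem:parallax-interleaving}, and your argument spells out exactly that functoriality --- the chain maps are restrictions of the single ambient simplicial isomorphism $f_\sharp$, so their composites are the internal bi-filtration inclusions, and applying homology yields the interleaving. No gaps; you have merely made explicit the bookkeeping the paper leaves implicit.
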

The proof of Lemma~\ref{lem:parallax-interleaving} is a worst-case distance estimate given in the Supplemental Material,
and Corollary~\ref{cor:parallax-interleaving-hom} follows functorially.

\section{Sampling Density and Local Simplicial Matching} 
\label{sec:LSM}
The goal ``the geometry of $K$ should match the geometry of $X$'' requires that $X$ has sufficient sampling density throughout $K$ to express a meaningful comparison.
The ideal situation would require
there is a (small) scale $\lambda$ for which:
(1) $K^\circ$ is homeomorphic to $\bigcup_{x \in X} B_\lambda(x)$, so that these $\lambda$ balls capture the topology of $K$;
(2) all ``highly persistent'' homological features of $X$ are born before $\lambda$; and 
(3) $\bigcup_{x \in X} B_\lambda(x) \subset K$, so that perturbations of size $\lambda$ in the dataset $X$ are allowed, and so that these balls can be used as local charts in $K$.
These sampling properties may or may not be true for any particular pair $(X,K)$, but Definition~\ref{def:LSM} provides scales for comparison.

\begin{defn}[Locally Simplicially Matched]
We say that $X$ and $K\in \M(X)$ are $\lambda$-\emph{locally simplicially matched [LSM]} if the subset
$P_{t,0} \subset R_t$ is an equality for all $t\leq \lambda$.
For any $K \in \M(X)$, the first non-LSM scale realized by the Rips complex is 
\[\lambda_{\sup,X}(K) \defeq \sup \{ \lambda ~:~ \text{$X,K$ are $\lambda$-LSM}\} 
= \min \{ \rho_V(Y) ~:~  \rho_K(Y) > \rho_V(Y)\ \exists Y \in R \}.\]
The last LSM scale realized by the Rips complex is 
\[ \lambda_{\lo,X}(K) \defeq \max \{\lambda < \lambda_{\sup,X}(K)~:~\rho_V(Y)=\lambda \  \exists Y \in R \}.\]
Another LSM scale is 
\( \lambda_{\ball,X}(K) \defeq \max\{\lambda ~:~ \bigcup_{x \in X}B_\lambda(x) \subset K^\circ\}.\)
\label{def:LSM}
\end{defn}

When $X$ and $K$ are $\lambda$-LSM, we will identify
$P_{t,0}$ with $R_t$ so that $H_*P_{t,0} = H_*R_t$ whenever $t \leq \lambda$. Furthermore,
locally simplicially matched implies 
$P_{t,\epsilon} = R_t$ for any $\epsilon > 0$ and $t < \lambda_{\sup,X}(K)$, following directly from the definition of $P_{a,\epsilon}$.

\begin{lemma}
If $K \in \M(X)$, then $0< \lambda_{\ball,X}(K) < \lambda_{\sup,X}(K)$.
\label{lem:interior}
\end{lemma}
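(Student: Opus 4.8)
The plan is to prove the two bounds separately: the lower bound $0<\lambda_{\ball,X}(K)$ is a routine finiteness observation, while $\lambda_{\ball,X}(K)<\lambda_{\sup,X}(K)$ is the substantive part.

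For the lower bound, since $X$ is finite (and non-empty) and lies in the open set $K^\circ$, each $x\in X$ has a radius $r_x>0$ with $B_{r_x}(x)\subseteq K^\circ$, so $\lambda_0\defeq\min_{x\in X}r_x$ is positive and $\bigcup_{x\in X}B_{\lambda_0}(x)\subseteq K^\circ$, whence $\lambda_{\ball,X}(K)\ge\lambda_0>0$. I also record that $\lambda_{\ball,X}(K)<\infty$ unless $K=V$: if $K\ne V$ then $K^\circ\subsetneq V$, so choosing $q\in V\setminus K^\circ$ and any $x_0\in X$ gives $\lambda_{\ball,X}(K)\le d(x_0,q)<\infty$ (for $K=V$ both sides of the lemma are $+\infty$). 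This finiteness is what keeps the strict inequality meaningful even in the degenerate case where $K$ distorts no chain at all.

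For the upper bound, set $\lambda\defeq\lambda_{\ball,X}(K)$. The crux is that $K$ induces \emph{no geodesic distortion} at Rips scales $\le\lambda$, i.e. $\rho_V(Y)\le\lambda$ forces $\rho_K(Y)=\rho_V(Y)$ for every $Y\in R$. Because $\rho_V$ and $\rho_K$ of a simplex, hence of a chain, are the maxima of the corresponding values over its edges, it suffices to treat an edge $e=(x_i,x_j)$ with $\rho_V(e)=s\le\lambda$. Then the $V$-geodesic distance from $x_i$ to $x_j$ equals $2s$, and (as $V$ is geodesic) there is a shortest $V$-geodesic $\gamma$ of length $2s$ joining them. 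Every point $p$ of $\gamma$ lies within $V$-distance $\le s\le\lambda$ of $x_i$ or of $x_j$: if this distance is strictly less than $\lambda$ then $p\in B_\mu(x_i)\subseteq K^\circ$ (or the symmetric statement) for some $\mu<\lambda$, using $\bigcup_{x\in X}B_\mu(x)\subseteq K^\circ$; if it equals $\lambda$ then $p$ is the limit, along a sub-geodesic of $\gamma$, of points at distance $<\lambda$ from the relevant endpoint, hence $p\in\overline{K^\circ}=K$. Thus $\gamma\subseteq K$, so the $K$-geodesic distance from $x_i$ to $x_j$ is also $2s$ (it is $\le 2s$ because $\gamma\subseteq K$ has length $2s$, and $\ge 2s$ because geodesics in $K$ are never shorter than in $V$), which forces $\rho_K(e)=\rho_V(e)$. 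Consequently no $Y\in R$ with $\rho_K(Y)>\rho_V(Y)$ satisfies $\rho_V(Y)\le\lambda$, so the minimum (or $+\infty$ if there is no such $Y$) defining $\lambda_{\sup,X}(K)$ strictly exceeds $\lambda$, as claimed.

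I expect the only delicate point to be the boundary case $\rho_V(e)=\lambda$, in which a geodesic point may lie exactly on a sphere of radius $\lambda$ about an endpoint: the open-ball inclusion $\bigcup_{x\in X}B_\lambda(x)\subseteq K^\circ$ does not by itself place that sphere inside $K$, so there one must combine the fact that $V$ is a geodesic space (the sphere point is approached from inside along a geodesic) with the regular-closedness $K=\overline{K^\circ}$ of the model. Everything else — the reduction from chains to simplices to edges, and the comparison of $K$- and $V$-geodesic lengths — is immediate from the Rips definition in Section~\ref{sec:parallax} together with the inclusion $K\subseteq V$.
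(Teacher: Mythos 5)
Your proof is correct and is essentially the argument the paper intends: the paper gives no written proof of this lemma (it is dismissed as ``immediate observations from the definitions''), and your write-up supplies exactly the missing details — positivity of $\lambda_{\ball,X}(K)$ from finiteness of $X$ and openness of $K^\circ$, the absence of geodesic distortion for edges with $\rho_V(e)\leq\lambda_{\ball,X}(K)$ by covering the minimizing geodesic with the balls $B_{\lambda}(x_i)$, $B_{\lambda}(x_j)$, and strictness of the inequality from finiteness of the Rips complex. Your treatment of the boundary case $\rho_V(e)=\lambda_{\ball,X}(K)$ via $K=\overline{K^\circ}$, and your remark that the statement degenerates when $K=V$ (both quantities become $+\infty$, a case the paper explicitly admits into $\M(X)$), are both correct and flag genuine edge cases the paper silently glosses over.
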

However, it may be that $\lambda_{\lo,X}(K)=0$, if the shortest edge $e \in R$ has $\rho_V(e) < \rho_K(e)$.

\begin{cor}
For any $K,K' \in \M(X)$, there is some $0<\lambda$ such that each pair $(X,K)$ and $(X,K')$ is $\lambda$-locally simplicially matched.
\end{cor}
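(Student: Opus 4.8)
The plan is to deduce this immediately from Lemma~\ref{lem:interior} together with the explicit formula for $\lambda_{\sup,X}(K)$ recorded in Definition~\ref{def:LSM}. First I would note the elementary reformulation that $X$ and $K$ are $\lambda$-LSM if and only if $\lambda < \lambda_{\sup,X}(K)$. Indeed, since $\rho_V(Y)\le\rho_K(Y)$ always (Lemma 2.2), the inclusion $P_{t,0}\subset R_t$ is proper exactly when some chain $Y\in R$ has $\rho_V(Y)\le t$ while $\rho_K(Y)>\rho_V(Y)$; the least such $t$ is precisely $\min\{\rho_V(Y): \rho_K(Y)>\rho_V(Y),\ \exists Y\in R\}=\lambda_{\sup,X}(K)$, and this minimum is attained because $\rho_V$ takes only finitely many values on the (finite) Rips complex of $X$. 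Hence $P_{t,0}=R_t$ for all $t\le\lambda$ exactly when $\lambda$ is strictly below $\lambda_{\sup,X}(K)$, with the convention that every $\lambda$ works when this quantity is $\min\emptyset=\infty$.

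Next I would apply Lemma~\ref{lem:interior} to each of $K$ and $K'$ to obtain $\lambda_{\sup,X}(K)\ge\lambda_{\ball,X}(K)>0$ and likewise $\lambda_{\sup,X}(K')>0$; these quantities are strictly positive and possibly $+\infty$. It then suffices to choose any single $\lambda>0$ that is strictly smaller than both $\lambda_{\sup,X}(K)$ and $\lambda_{\sup,X}(K')$ --- for instance $\lambda=\tfrac12\min\{\lambda_{\sup,X}(K),\lambda_{\sup,X}(K')\}$ when both are finite, and an arbitrary positive number otherwise. By the reformulation above, each of $(X,K)$ and $(X,K')$ is then $\lambda$-LSM, which is exactly the assertion.

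There is essentially no obstacle here; the only point requiring care is the bookkeeping distinction between $\lambda_{\sup}$ as a supremum rather than a maximum, together with the boundary case $\lambda_{\sup,X}(K)=+\infty$ (which occurs, for example, for $K=V$, since $\bigcup_{x\in X}B_\lambda(x)\subset V^\circ=V$ for every $\lambda$). Both are handled by selecting $\lambda$ strictly below the relevant value rather than equal to it, so the needed strict inequalities are never lost.
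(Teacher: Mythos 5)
Your proof is correct and is exactly the ``immediate observation from the definitions'' that the paper intends: positivity of $\lambda_{\sup,X}(K)$ and $\lambda_{\sup,X}(K')$ via Lemma~\ref{lem:interior}, then any $\lambda$ strictly below both works. The careful handling of the sup-versus-max issue and the $\lambda_{\sup}=\infty$ case is a welcome bit of extra rigor but does not change the route.
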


\begin{cor}
For any $X,X' \in \M^*(K)$, there is some $0<\lambda$ such that each pair $(X,K)$ and $(X',K)$ is $\lambda$-locally simplicially matched.
\end{cor}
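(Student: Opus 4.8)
The plan is to reduce the statement to Lemma~\ref{lem:interior}, which already produces a strictly positive LSM scale for any single pair. Since $X' \in \M^*(K)$ is the same as $K \in \M(X')$, I may apply Lemma~\ref{lem:interior} to each of $(X,K)$ and $(X',K)$, obtaining
\[
0 < \lambda_{\ball,X}(K) < \lambda_{\sup,X}(K), \qquad 0 < \lambda_{\ball,X'}(K) < \lambda_{\sup,X'}(K).
\]
In particular each of $\lambda_{\ball,X}(K)$ and $\lambda_{\ball,X'}(K)$ lies strictly below the first non-LSM scale of its pair, hence is itself an LSM scale for that pair, and both are strictly positive.

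Next I would set $\lambda \defeq \min\{\lambda_{\ball,X}(K),\ \lambda_{\ball,X'}(K)\}$, which is positive as the minimum of two positive reals. The set of scales at which a fixed pair is LSM is down-closed: if $P_{t,0}=R_t$ for all $t\le\mu$, then the same equality holds for all $t\le\mu'$ whenever $\mu'\le\mu$. So from $\lambda \le \lambda_{\ball,X}(K)$ and the fact that $\lambda_{\ball,X}(K)$ is an LSM scale for $(X,K)$, the pair $(X,K)$ is $\lambda$-LSM; symmetrically, from $\lambda \le \lambda_{\ball,X'}(K)$ the pair $(X',K)$ is $\lambda$-LSM. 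This common $\lambda$ is the scale claimed.

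There is no real obstacle here beyond bookkeeping; the content is entirely in Lemma~\ref{lem:interior} (and, underneath it, the geodesic comparison of Lemma~\ref{lem:data-perturbation} together with positivity of interior ball radii around a finite set). The one point worth stating explicitly is why one cannot simply take $\min\{\lambda_{\sup,X}(K),\lambda_{\sup,X'}(K)\}$: a coarsely sampled dataset may force its own $\lambda_{\sup}$ small, and a supremum need not be attained, so those scales are not guaranteed to be LSM. Passing through the ball radii $\lambda_{\ball}$—whose positivity and strict dominance by $\lambda_{\sup}$ are exactly what Lemma~\ref{lem:interior} supplies—avoids this. The argument is the exact analogue of the preceding corollary, with the two varying datasets $X,X'$ playing the role of the two varying models $K,K'$.
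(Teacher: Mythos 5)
Your proof is correct and matches the paper's (implicit) argument: the paper treats this corollary as an immediate consequence of Lemma~\ref{lem:interior}, exactly as you do, by taking a common positive scale below both $\lambda_{\sup}$ values and using down-closedness of the set of LSM scales. The only quibble is your aside attributing the positivity in Lemma~\ref{lem:interior} to Lemma~\ref{lem:data-perturbation}; it really comes from $X$ being a finite subset of the open set $K^\circ$, but this does not affect your argument.
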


\begin{lemma}
If $K_1, K_2 \in \M(X)$ and $K_1 \subset K_2$, then $\lambda_{\sup,X}(K) \leq \lambda_{\sup,X}(K')$.
\end{lemma}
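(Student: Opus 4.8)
The plan is to reduce this to the monotonicity of geodesic distortion recorded in the first (unnumbered) Lemma of Section~\ref{sec:parallax}, together with the ``$\min$'' description of $\lambda_{\sup,X}$ supplied in Definition~\ref{def:LSM}. I read the claim as $\lambda_{\sup,X}(K_1)\le\lambda_{\sup,X}(K_2)$, writing $K_1,K_2$ for the sets the statement calls $K,K'$; intuitively, the smaller model $K_1$ has the longer geodesics, so it distorts the Rips filtration at least as early, hence has the smaller first non-LSM scale.

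First I would recall that $K_1\subset K_2\subset V$ gives, for every chain $Y\in R$, the chain of inequalities $\rho_V(Y)\le\rho_{K_2}(Y)\le\rho_{K_1}(Y)$. From this the sets of ``non-LSM witnesses'' are nested: if $Y\in R$ satisfies $\rho_{K_2}(Y)>\rho_V(Y)$, then also $\rho_{K_1}(Y)\ge\rho_{K_2}(Y)>\rho_V(Y)$, so
\[
\{\, Y\in R ~:~ \rho_{K_2}(Y)>\rho_V(Y)\,\}\ \subseteq\ \{\, Y\in R ~:~ \rho_{K_1}(Y)>\rho_V(Y)\,\}.
\]
Next I would apply the formula $\lambda_{\sup,X}(K)=\min\{\rho_V(Y):\rho_K(Y)>\rho_V(Y),\ \exists\,Y\in R\}$. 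For $K_1$ this minimum ranges over a superset of the chains (hence of the $\rho_V$-values) that define it for $K_2$, so the minimum can only decrease, giving $\lambda_{\sup,X}(K_1)\le\lambda_{\sup,X}(K_2)$. The convention $\min\emptyset=\infty$ handles the degenerate case: if $(X,K_2)$ is $\lambda$-LSM for every $\lambda$ then its witness set is empty, $\lambda_{\sup,X}(K_2)=\infty$, and the inequality is trivial.

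There is essentially no obstacle here; the only point requiring a little care is consistency between the two descriptions of $\lambda_{\sup,X}$ in Definition~\ref{def:LSM} — the ``$\sup$ of LSM scales'' and the ``$\min$ of $\rho_V$ over distorting chains'' — which guarantees that the relevant infimum is attained by a chain of the finite complex $R$ and so behaves well under the set containment above. Granting that identification (stated in Definition~\ref{def:LSM}), the argument is the one-line monotonicity computation just described.
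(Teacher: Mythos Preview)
Your proposal is correct and is precisely the natural unpacking of what the paper means when it says this lemma is an ``immediate observation from the definitions'' (the paper gives no further detail). The argument via the monotonicity $\rho_V\le\rho_{K_2}\le\rho_{K_1}$ from the first lemma of Section~\ref{sec:parallax} together with the $\min$-formula in Definition~\ref{def:LSM} is exactly the intended route.
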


The next lemma provides a bound on $\lambda_{\sup}$ under small data perturbations, which is a form of stability. 
\begin{lemma}
Assume Euclidean $V$.
Suppose that $K \in \M(X) \cap \M(X')$. If $f:X \overset{\kappa}{\approx}_K X'$ for $\kappa < \frac12 \lambda_{\ball,X}(K)$ 
then 
$\frac12\lambda_{\ball,X}(K) - \kappa \leq \lambda_{\sup,X'}(K)$.
\label{lem:stabilityoflsm}
\end{lemma}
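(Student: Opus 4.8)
The plan is to peel the statement about $\lambda_{\sup,X'}(K)$ down to an elementary fact about Euclidean midpoints. Write $\mu\defeq\lambda_{\ball,X}(K)$, so that $B_\mu(x)\subset K^\circ$ for every $x\in X$, and fix any $t\le\tfrac12\mu-\kappa$. By Definition~\ref{def:LSM} it suffices to show $X'$ and $K$ are $t$-LSM, i.e.\ $P(X',K,V)_{s,0}=R(X',V)_s$ for all $s\le t$. Since $R(X',V)$ and $R(X',K)$ are clique complexes, $\rho_V(Y)=\max_{e\subset Y}\rho_V(e)$ and $\rho_K(Y)=\max_{e\subset Y}\rho_K(e)$ over the edges of a simplex $Y$, so it is enough to check, for every edge $e=(x_i',x_j')$ with $\rho_V(e)\le t$, that $\rho_K(e)=\rho_V(e)$: granting this, every $Y\in R(X',V)_s$ has all its edges of $\rho_V$-scale $\le s\le t$, hence $\rho_K(Y)=\rho_V(Y)\le s$ and $Y\in P_{s,0}$, while the reverse inclusion $P_{s,0}\subset R(X',V)_s$ is automatic. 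Moreover $R(X',K)_\alpha\subset R(X',V)_\alpha$ (Lemma~\ref{lem:inclusion}) already gives $\rho_V(e)\le\rho_K(e)$, so only the reverse inequality is in question. With $\alpha\defeq\rho_V(e)=\tfrac12\|x_i'-x_j'\|\le t$, the Rips definition shows $\rho_K(e)\le\alpha$ as soon as the Euclidean midpoint $m\defeq\tfrac12(x_i'+x_j')$ lies in $K$, since $\|m-x_i'\|=\|m-x_j'\|=\alpha$. So the whole lemma comes down to: $m\in K$.

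This is where $\mu$ and $\kappa$ enter. Put $x_i\defeq f^{-1}(x_i')\in X$, so $\|x_i-x_i'\|\le\kappa$, and estimate
\[
  \|m-x_i\|\;\le\;\|m-x_i'\|+\|x_i'-x_i\|\;=\;\alpha+\kappa\;\le\;t+\kappa\;\le\;\tfrac12\mu\;<\;\mu,
\]
so $m\in B_\mu(x_i)\subset K^\circ\subset K$. Hence $X'$ and $K$ are $t$-LSM for every $t\le\tfrac12\mu-\kappa$, which gives $\lambda_{\sup,X'}(K)\ge\tfrac12\lambda_{\ball,X}(K)-\kappa$; the hypothesis $\kappa<\tfrac12\lambda_{\ball,X}(K)$ is precisely what makes this lower bound positive. (The same estimate in fact yields the stronger $\lambda_{\sup,X'}(K)>\lambda_{\ball,X}(K)-\kappa$; alternatively, observing $B_{\mu-\kappa}(f(x))\subset B_\mu(x)\subset K^\circ$ shows $\lambda_{\ball,X'}(K)\ge\mu-\kappa$, and then Lemma~\ref{lem:interior} applied to $(X',K)$ finishes it.)

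There is no genuine obstacle here; the substance is a single triangle inequality, and the only things to be careful about are scale-bookkeeping conventions. Two pitfalls in particular: first, $t$-LSM is a claim along the slice $\varepsilon=0$ of the bi-filtration and at Rips parameter $t$, so one must work with $P_{s,0}$, not with $P_{s,s}$ or a slice at some other fixed $\varepsilon$; second, the factor of $2$ between the Rips parameter $\alpha$ of an edge and the Euclidean length $\|x_i'-x_j'\|=2\alpha$ of that edge means the triangle inequality has to be applied to $\|m-x_i'\|=\alpha$ — applying it instead to the endpoints, e.g.\ taking a witness point on the unperturbed segment $[x_i,x_j]$, costs an extra $\kappa$ and the argument does not go through. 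Note finally that this argument uses only the pointwise bound $\|f(x)-x\|\le\kappa$ together with $B_\mu(x)\subset K^\circ$; the stronger $K$-geodesic hypothesis $f:X\overset{\kappa}{\approx}_K X'$ is not needed for this lemma, though it is of course needed for the companion interleaving results.
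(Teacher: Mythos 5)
Your proof is correct and follows essentially the same route as the paper's: one triangle inequality placing the relevant points inside the convex ball $B_{\lambda_{\ball,X}(K)}(x)\subset K^\circ$ around the unperturbed point. The only caveat is that if $\rho_K(e)\leq\alpha$ is read via $K$-geodesics of length $2\alpha$ (as the paper does elsewhere) rather than via the literal ball-intersection condition, membership of the midpoint alone is not quite enough---but your estimates already place $x_i'$, $x_j'$, and $m$ all inside $B_{\mu}(x_i)$, so convexity yields the whole segment in $K^\circ$, which is exactly the paper's closing step; your observed strengthening of the bound toward $\lambda_{\ball,X}(K)-\kappa$ via $\lambda_{\ball,X'}(K)$ and Lemma~\ref{lem:interior} is also valid.
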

The proof of Lemma~\ref{lem:stabilityoflsm} is a triangle-inequality argument in the Supplemental Material.
The other results are immediate observations from the definitions.

\section{Homological Matching}
\label{sec:HM}
From Section~\ref{sec:intro}, our purpose is to determine whether the geometry of $K$ matches the geometry of $X$.  
In Section~\ref{sec:LSM}, we introduced ``local simplicial matching'' as a way to compare small-scale geometry.
In this section, we introduce ``homological matching'' as way to compare large-scale geometry.
Generally this is done by asking whether highly persistent features of $X$ in $V$ (as measured by $HR$) are also highly persistent as features of $X$ in $K$ (as measured by $HP$).
This comparison is sensible if $X$ and $K$ are $\lambda$-locally simplicially matched,
so that cycles can be identified between $HR_\lambda =HP_{\lambda,0}$.
We phrase it algebraically in Definition~\ref{def:HM},
but Lemma~\ref{lem:HM} provides the interpretation that, among cycles born before $\lambda$,
those of long persistence $(\delta - \lambda)$ in $HR$
have even longer persistence $(\omega - \lambda)$ in $HL$,
meaning that $K$ has large-scale homological features corresponding to those of $X$. 

\begin{defn}[Homologically Matched]
For $K \in \M(X)$, and $0 \leq \lambda < \lambda_{\sup,X}(K) < \delta \leq \omega$,
we say that $X$ and $K$ are $(\lambda,\delta,\omega)$-\emph{homologically matched} [HM] if the transition maps of $HR$ and $HP$ satisfy 
$\ker HR_{\lambda \to \delta} \subset \ker HP_{(\lambda,\varepsilon_1)\to(\omega,\varepsilon_2)}$
for some $0\leq\varepsilon_1 \leq \varepsilon_2$.
Equivalently, if $\ker HR_{\lambda \to \delta} \subset \ker HL_{\lambda \to \omega}$ for some Rips-like path $L$.
\label{def:HM}
\end{defn}

Definition~\ref{def:HM} is guided by the Void Lemma (\ref{lem:void}) and the Matching Lemma (\ref{lem:HM}).
\begin{lemma}[Void Lemma]
Suppose $K \in \M(X)$ and $0 < \lambda_{\text{lo},X}(K)$.
Let $L$ be any Rips-like path.
If $[Y] \in HR(X,V)$ with birth $b < \lambda_{\text{lo},X}(K)$, then 
the deaths $c$, $d$,$ e$ of $[Y]$ in $HR(X,V)$, $HR(X,K)$, $HL$, respectively,
satisfy $c \leq d \leq e$. 
Moreover, $c<e$ implies that $K$ has a void that disrupts the death of $[Y]$,
and that void contains a ball of radius $r$ satisfying $(\pi-2)r \leq 2(d-c) \leq 2(e-c)$.
\label{lem:void}
\end{lemma}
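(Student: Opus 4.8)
The plan is to separate the two assertions. The chain of inequalities $c \leq d \leq e$ is a direct consequence of the earlier filtration lemmas. Since the birth $b < \lambda_{\lo,X}(K) < \lambda_{\sup,X}(K)$, the cycle $[Y]$ is already present as a class in $HP_{b,0} = HR_b$ by the LSM identification (Lemma~\ref{lem:diag} and the discussion after Definition~\ref{def:LSM}). By Lemma~\ref{lem:inclusion} and Corollary~\ref{cor:Hdeath}, if $[Y]$ dies at $e$ along a Rips-like path $L$, its image $\iota_*([Y])$ in $HR$ dies at some $c \leq e$; similarly, factoring the path $L$ through $R(X,K)_\bullet$ using Lemma~\ref{lem:inclusion} gives $HR(X,K)_\alpha \supset L_\alpha$ for all $\alpha$ past the LSM scale where the Rips-like path has $\varepsilon(\alpha)$ possibly large, so $d \leq e$; and $R(X,K)_\alpha \subset R_\alpha$ gives $c \leq d$. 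This is the routine part.

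For the quantitative statement, I would argue as follows. Suppose $c < e$, so the cycle $[Y]$ is filled in $R = R(X,V)$ at radius $c$ but not yet filled in $L$ at radii in $[c,e)$. Concretely, pick a $2$-chain $\Sigma$ in $R_c$ with $\partial \Sigma = Y$. Each triangle $(x_i,x_j,x_k)$ of $\Sigma$ exists in $R_c$ because the three pairwise $V$-geodesic balls of radius $c$ share a common point $p_{ijk} \in V$; but since the same triangle is not in $L$ at radius $< e$ (else $[Y]$ would be filled earlier — here one uses that $[Y]$ is nonzero in $HL$ below $e$ and that $L$ contains the inflexible path, so $P_{c,0} \subsetneq$ whatever is needed), there must be an edge of $\Sigma$ — or more precisely a witness point $p_{ijk}$ — at which the $K$-geodesic distance is strictly larger than the $V$-geodesic distance. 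This locates a void $\Omega$ of $K$ through which all $V$-geodesics between the relevant pair pass. To get the ball radius, I would invoke the geometric estimate sketched in Figure~\ref{fig:estimate} (Center): if a straight $V$-segment of half-length $\rho_V$ is deformed around a void, and the void contains a ball of radius $r$ tangent to that segment at its midpoint, then the shortest detour has half-length at least $\sqrt{\rho_V^2 + r^2}$ when routed to the ball's equator, but more tightly, going around a half-disk of radius $r$ costs an extra arc-versus-chord length of $(\pi - 2)r$ on each side; comparing the $V$-death radius $c$ (which is $\rho_V$-scaled, hence the factor $2$: $2c = 2\rho_V$) with the $K$-death radius $d$ (forced up to at least $2\rho_V + (\pi-2)r$) yields $(\pi-2) r \leq 2(d - c)$, and $d \leq e$ gives the second inequality $2(d-c) \leq 2(e-c)$.

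The main obstacle I anticipate is making the "locates a void containing a ball of radius $r$" step precise and extracting exactly the constant $\pi - 2$. The difficulty is that the witness points $p_{ijk}$ for the triangles of a filling $2$-chain live in $V$, not on any single geodesic, so one must argue that \emph{some} pair $x,y$ among the vertices of $\Sigma$ has its $V$-geodesic disrupted by a void of $K$ whose "width" is controlled by $d - c$. I would handle this by contradiction: if every void encountered along every $V$-geodesic between vertices of $\Sigma$ admitted only balls of radius $r$ with $(\pi - 2) r > 2(d-c)$, then each such geodesic could be $K$-realized with length distortion strictly less than $2(d - c)$, which would force the corresponding $K$-geodesic balls of radius $< d$ to already intersect — reconstructing the triangles of $\Sigma$ inside $R(X,K)_t$ for some $t < d$, contradicting that $d$ is the $K$-death. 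The half-disk detour computation (arc length $\pi r$ versus chord $2r$, so excess $(\pi - 2) r$) is the clean worst-case and is exactly the content of the right panel of Figure~\ref{fig:estimate}; I expect the bulk of the written proof to be this reduction plus a careful statement of which geodesic the ball is inscribed against, with the trigonometric estimate itself being a one-line computation best deferred to the Supplemental Material.
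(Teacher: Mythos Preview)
Your handling of $c \leq d \leq e$ is fine and matches the paper. The issues are all in the void estimate.

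First, the paper's argument is structurally different and much shorter. It does \emph{not} look at a full bounding chain $\Sigma$ or at triangles and witness points; it looks only at the single \emph{killing edge} --- the edge whose arrival in $R(X,V)$ at radius $c$ causes $[Y]$ to die. The proof then splits $c<e$ into the two sub-cases $c<d$ and $d<e$. In the case $c<d$, that killing edge is present in $R_c$ but absent from $R(X,K)_c$, so its $V$-geodesic meets the open set $K^c$; any open ball $B_r\subset K^c$ \emph{centered on that geodesic} forces the $K$-geodesic to replace the chord $2r$ by at least the half-circumference $\pi r$, giving $(\pi-2)r \leq 2d-2c$. The case $d<e$ is then reduced to $c<d$ by observing that the $R(X,K)$-killing edge at level $d$ is excluded from $L_d=P_{d,\varepsilon(d)}$, which forces $\varepsilon(d)<d-\rho_V$ for that edge and hence $c\leq\rho_V<d$. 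You do not have this case split, and your 2-chain picture is both unnecessary and dimension-specific.

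Second, your geometry for the constant is off. The ball is \emph{centered on} the geodesic (so the detour replaces a diameter $2r$ by a semicircle $\pi r$), not ``tangent to the segment at its midpoint.'' Your invocation of the $\sqrt{\rho_V^2+r^2}$ picture from Figure~\ref{fig:estimate} (Center) is the transverse-disk estimate used for Algorithm~\ref{alg:bound}, which is a different computation for a different purpose; it plays no role in the proof of Lemma~\ref{lem:void}.

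Third, your contradiction argument is inverted. Assuming ``every ball in every void has $(\pi-2)r>2(d-c)$'' would mean the voids are \emph{large}, which makes $K$-detours \emph{longer}, not shorter --- the opposite of what you claim in the next clause. (And in any case every nonempty open void contains arbitrarily small balls, so an existence statement with only an upper bound on $r$ is not the actual content; the paper's bound is really a constraint on \emph{all} balls centered on the killing geodesic.)
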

In particular, if the class $[Y]$ has $e = \infty$ for the ``inflexible'' path $L_{t}=P_{t,0}$, then $K$ contains a void.
The proof of Lemma~\ref{lem:void} appears in the Supplementary Material and uses simple distance estimates.

\begin{lemma}[Matching Lemma]
Suppose that all pairs in $X$ have distinct lengths,
and that $X$ and $K\in \M(X)$ are $(\lambda,\delta,\omega)$-HM.
Then there is a Rips-like path $L$ for which each dot
$(b,d)$ in the persistence diagram of $L$ (or bar in the barcode) with $b \leq \lambda < \omega < d$
corresponds via $\lambda$-LSM to a dot $(b,c)$ in the persistence diagram of $R$ with
$b \leq \lambda < \delta < c$.
\label{lem:HM}
\end{lemma}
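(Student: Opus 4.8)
The plan is to unwind Definition~\ref{def:HM} directly and then use the Void Lemma to interpret the inclusion of kernels as a statement about persistence diagrams. Start by choosing the Rips-like path $L$ furnished by the equivalent formulation of $(\lambda,\delta,\omega)$-HM: we have $\ker HR_{\lambda\to\delta} \subset \ker HL_{\lambda\to\omega}$. Because all pairs in $X$ have distinct lengths, the Rips filtration of $X$ in $V$ is, degree by degree, a filtration by distinct critical values, so standard persistence theory gives a well-defined barcode for $HR$ whose bars $(b,c)$ are in bijection with a matching basis; the same genericity pushes through to $HL$ along any Rips-like path, since $L_\alpha = P_{\alpha,\varepsilon(\alpha)}$ is a subcomplex of $R_\alpha$ and its added simplices still have distinct $\rho_V$-values (and $\rho_K$-values inherited from the same finite point set). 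So both $HR$ and $HL$ have honest persistence diagrams to compare.

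Next, take a bar $(b,d)$ in the diagram of $L$ with $b \le \lambda < \omega < d$. Its birth class $[Y] \in HL_b$ survives past $\omega$, so it is \emph{not} in $\ker HL_{\lambda\to\omega}$ (after transporting along $HL_b \to HL_\lambda$, which is injective on this class since $b \le \lambda < \omega < d$ means the class is alive throughout $[b,\omega]$). Via the $\lambda$-LSM identification $HL_{\lambda} = HP_{\lambda,0} = HR_\lambda$ — valid because $\lambda < \lambda_{\sup,X}(K)$ forces $P_{t,0}=R_t$ for $t\le\lambda$, per the discussion after Definition~\ref{def:LSM} — the class $[Y]$ corresponds to a class in $HR_\lambda$. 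The contrapositive of the kernel inclusion says: a class in $HR_\lambda$ not killed in $HL$ by time $\omega$ is not killed in $HR$ by time $\delta$. Hence $[Y]$, viewed in $HR_\lambda$, survives past $\delta$, i.e. its death $c$ in $HR$ satisfies $c > \delta$. It remains to pin the birth: because $b \le \lambda \le \lambda_{\sup,X}(K)$ and the complexes agree below $\lambda_{\sup,X}(K)$, the birth of the corresponding class in $HR$ is the same value $b$ (any earlier or later birth would contradict the LSM identification of the filtrations up to scale $\lambda$, using that $b<\lambda$ generically by distinctness of lengths, or handling $b=\lambda$ as a boundary case directly). Thus $(b,c)$ is a dot in the diagram of $R$ with $b \le \lambda < \delta < c$, which is exactly the claimed correspondence.

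Finally I would check that the correspondence $(b,d) \mapsto (b,c)$ is well-defined and injective: this is just the observation that the $\lambda$-LSM isomorphism $HL_\lambda \cong HR_\lambda$ is a fixed linear isomorphism, and both diagrams' long bars crossing $\lambda$ are indexed by (a quotient of) classes alive at scale $\lambda$, so the map is induced by an isomorphism on those birth classes and hence is a bijection onto its image. The distinctness-of-lengths hypothesis is used precisely here and in the previous paragraph to avoid multiplicities and simultaneous births/deaths that would blur the matching.

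The main obstacle I anticipate is not the kernel-chasing — that is formal once the identifications are set up — but rather carefully justifying that the birth coordinate is genuinely preserved under the LSM identification when $b$ is at or very near $\lambda$, and more generally making the persistence-module bookkeeping rigorous when $HL$ and $HR$ are only identified on the sublevel $[0,\lambda_{\sup,X}(K))$ rather than globally. One clean way to sidestep edge cases is to invoke the equivalence in Definition~\ref{def:HM} to work with a single Rips-like path throughout, note that $L_\alpha \subset R_\alpha$ for all $\alpha$ with equality for $\alpha < \lambda_{\sup,X}(K)$, and then phrase the whole argument in the morphism of persistence modules $HL_* \to HR_*$ induced by inclusion, reading off the bar correspondence from the structure of that morphism below and above $\lambda$. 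The Void Lemma then supplies the geometric meaning (each such long bar of $R$ that fails to become longer in $L$ would force a void), but for this lemma's statement only the diagram-level correspondence is needed.
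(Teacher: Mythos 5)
Your proposal is correct and follows essentially the same route as the paper's proof: both select the Rips-like path $L$ supplied by Definition~\ref{def:HM}, use the $\lambda$-LSM identification to match dots of $R$ and $L$ born by $\lambda$, and then apply the kernel inclusion $\ker HR_{\lambda\to\delta}\subset\ker HL_{\lambda\to\omega}$ (you via the contrapositive, the paper via contradiction) to conclude $\delta<c$. Your extra care about birth preservation and the role of the distinct-lengths hypothesis is a welcome elaboration but not a different argument.
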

Note: to check whether $X$ and $K$ are $(\lambda,\delta,\omega)$-HM, it suffices to check a
Rips-like path through $P_{\lambda,0}$ and $P_{\omega,\infty}$.
At the other extreme, we get an overly-strict bound by checking the ``inflexible''
Rips-like path $L_{t}=P_{t,0}$.

Because of the filtration stability of $HP$ in Corollary~\ref{cor:parallax-interleaving-hom},
$(\lambda,\delta,\omega)$-HM is also stable to perturbation in $X$, as seen in Theorem~\ref{thm:stability}.

\begin{thm}[Stability of Homological Matching]
Suppose that $X$ and $K$ are $(\lambda,\delta,\omega)$-HM.
If $f: X\overset{\kappa}{\approx}_K X'$ such that $X'$ and $K$ are $(\lambda-\kappa)$-LSM,
then $(X',K)$ are $(\lambda-\kappa, \delta-\kappa, \omega+\kappa)$-homologically matched. 
\label{thm:stability}
\end{thm}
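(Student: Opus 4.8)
The plan is to transport the defining inclusion of $(\lambda,\delta,\omega)$-HM across the interleaving isomorphisms supplied by Lemma~\ref{lem:data-perturbation} (for the ambient Rips complex $HR$) and Corollary~\ref{cor:parallax-interleaving-hom} (for the parallax homology $HP$), using the fact that $f: X \overset{\kappa}{\approx}_K X'$ is an isomorphism in $\M^*(K)$ so cycles can be identified coherently on both sides. Concretely, the hypothesis gives some $0 \le \varepsilon_1 \le \varepsilon_2$ with $\ker HR_{\lambda\to\delta} \subset \ker HP_{(\lambda,\varepsilon_1)\to(\omega,\varepsilon_2)}$, and I want to produce $0 \le \varepsilon_1' \le \varepsilon_2'$ with $\ker HR'_{(\lambda-\kappa)\to(\delta-\kappa)} \subset \ker HP'_{(\lambda-\kappa,\varepsilon_1')\to(\omega+\kappa,\varepsilon_2')}$, where primes denote the constructions for $X'$. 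The $(\lambda-\kappa)$-LSM hypothesis on $(X',K)$ is exactly what makes the left-hand side of the conclusion live in a place where cycles are identifiable ($HR'_{\lambda-\kappa} = HP'_{\lambda-\kappa,0}$), so the statement is not vacuous.

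First I would set up the diagram chase for the $HR$ side. Take a class $[Y']$ in $HR'_{\lambda-\kappa}$ that dies by stage $\delta-\kappa$, i.e. $[Y'] \in \ker HR'_{(\lambda-\kappa)\to(\delta-\kappa)}$. Using $f_\sharp^{-1}$ from Lemma~\ref{lem:data-perturbation}, pull $[Y']$ back to a class $[Y] = (f_\sharp^{-1})_*[Y']$ in $HR_{(\lambda-\kappa)+\kappa} = HR_\lambda$; by naturality of the interleaving and the fact that $f_\sharp^{-1} f_\sharp$ is the transition map $HR'_{\lambda-\kappa}\to HR'_{\lambda+\kappa}$, the death of $[Y']$ at $\delta-\kappa$ forces the death of $[Y]$ at $(\delta-\kappa)+\kappa = \delta$, so $[Y] \in \ker HR_{\lambda\to\delta}$. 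Now invoke the HM hypothesis for $(X,K)$: $[Y] \in \ker HP_{(\lambda,\varepsilon_1)\to(\omega,\varepsilon_2)}$. Then push forward through Corollary~\ref{cor:parallax-interleaving-hom}: $f_*[Y]$ dies in $HP'$ by bumping each index by $(\kappa,2\kappa)$, so $f_*[Y] \in \ker HP'_{(\lambda+\kappa,\varepsilon_1+2\kappa)\to(\omega+\kappa,\varepsilon_2+2\kappa)}$. Finally, precompose with the $HP'$ transition map from $(\lambda-\kappa,\varepsilon_1')$ up to $(\lambda+\kappa,\varepsilon_1+2\kappa)$ to land $[Y']$ itself in the kernel; choosing $\varepsilon_1' = \varepsilon_1$ works since $HP'$ is bi-filtered (Lemma~\ref{lemma:bifiltered}) and $(\lambda-\kappa,\varepsilon_1) \le (\lambda+\kappa,\varepsilon_1+2\kappa)$ coordinatewise. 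Set $\varepsilon_2' = \varepsilon_2 + 2\kappa$; then $\varepsilon_1' \le \varepsilon_2'$, and $(X',K)$ is $(\lambda-\kappa,\delta-\kappa,\omega+\kappa)$-HM as claimed. I would also check the admissibility inequalities $0 \le \lambda - \kappa < \lambda_{\sup,X'}(K) < \delta - \kappa \le \omega + \kappa$: the middle inequality is the $(\lambda-\kappa)$-LSM hypothesis together with the original $\delta > \lambda_{\sup,X}(K)$ transported via Lemma~\ref{lem:stabilityoflsm}-type reasoning, and the outer ones are immediate.

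The one subtlety, and the step I expect to require the most care, is the coherent identification of homology classes across \emph{both} interleavings simultaneously: I am pulling back along $f$ in $HR$ and pushing forward along the same $f$ in $HP$, and I need the square relating $\iota_*: HP \to HR$ (on the $X$ side) and $\iota_*': HP' \to HR'$ (on the $X'$ side) to commute with $f_\sharp$, so that "$[Y]$ dies in $HP$" really does certify "$f_*[Y']$ dies in $HP'$" for the \emph{same} underlying cycle $Y'$. This commutativity holds because $f_\sharp$ is just relabeling vertices of simplices and both $\iota$ and $\iota'$ are the literal subcomplex inclusions $P \hookrightarrow R$, so the relabeling commutes with inclusion on the nose; but it is worth stating explicitly, since the HM condition in Definition~\ref{def:HM} is phrased purely in terms of $HR$ and $HP$ and we are silently using that the correspondence $f$ respects the $P \subset R$ structure. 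Once that compatibility is pinned down, the rest is the bookkeeping of composing the two interleavings, exactly the "$\pm(\kappa, 2\kappa)$" shift advertised in Figure~\ref{fig:estimate} (Left), and the theorem follows.
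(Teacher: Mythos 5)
Your proposal is correct and follows essentially the same route as the paper's proof: a diagram chase that pulls $\ker HR'_{\lambda-\kappa\to\delta-\kappa}$ back along the $HR$-interleaving of Lemma~\ref{lem:data-perturbation}, applies the HM hypothesis for $(X,K)$, pushes forward along the $(\kappa,2\kappa)$-interleaving of Corollary~\ref{cor:parallax-interleaving-hom}, and uses the $(\lambda-\kappa)$-LSM hypothesis to identify the result back inside $HP'$. The compatibility subtlety you flag (that $f_\sharp$ commutes with the inclusions $P\subset R$ and that $f_*\circ f_*^{-1}$ is the transition map) is exactly what the commutativity of the paper's diagram encodes, and your choice $\varepsilon_1'=\varepsilon_1$ versus the paper's $\varepsilon_1+2\kappa$ is immaterial since both equal $HR'_{\lambda-\kappa}$ under LSM.
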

The proof of Lemma~\ref{lem:HM} is functorial, and Theorem~\ref{thm:stability} is a diagram chase. Both are given in the Supplemental Material.

\begin{defn}
Given $X$ and $K \in \M(X)$ let $\lambda_{\hi,X}(K)$ denote the minimum of those $\delta$ for which $X,K$ are $(\lambda_{\lo,X}(K),\delta,\infty)$-HM.
\end{defn}

\section{Interpretation and Specification}
\label{sec:interpretation}
Therefore, to answer our original purpose, we can assess whether a model $K \in \M(X)$ is a ``good geometric match'' for $X$ using the following procedure:
(1) Regardless of $K$, examine the persistence diagram of $R(X,V)$ to identify dots with early birth and long persistence, which TDA theory tells us (e.g. the Homology Inference Theorem \cite{cohen-steiner_stability_2007}) should indicate genuine geometric features of $X$;
(2) for a model $K \in \M(X)$, compute $\lambda_{\lo,X}(K)$ and $\lambda_{\hi,X}(K)$; and
(3) check whether those dots are born before $\lambda_{\lo,X}(K)$ and die after $\lambda_{\hi,X}$.

If we believe that the multiscale geometric patterns among the points in $X$ is meaningful,
then this procedure is essentially a specification for how a ``good'' model ought to behave.

This process can give a ``bad geometric match'' in various ways, for example: 
if step (1) does not show a clear collection of well-separated dots, then it is unlikely that $X$ actually has computable geometry that can be captured with the Rips complex;
if step (2) yields $\lambda_{\lo,X}(K)=0$, then $K$ has voids between every pair of points in $X$, possibly due to under-sampling or over-fitting, and should not be trusted for any interpolative purpose; or if step (3) shows that the quadrant to the upper-left of $(\lambda_{\lo,X}(K),\lambda_{\hi,X}(K))$ in the birth-death plane does not capture the desired dots of $X$, then $K$ is failing to capture specific high-persistence features of $X$.

\section{Computational Methods}
\label{sec:algo}
In this section, we provide algorithms to estimate $P_{\alpha,\varepsilon}$ in the practical case $V = \mathbb{R}^n$.
These algorithms are implemented in Python (and development continues) in our open-source software at
  \url{https://gitlab.com/geomdata/topological-parallax}

The Rips complex $R$ can be computed efficiently, using \cite{otter_roadmap_2017,tauzin_giotto-tda_2021}.
But, the set $K$ is known only through the indicator function $k$,
and the Rips complex $R(X,K)$ cannot be computed directly.

Consider $x,y \in X$ joined by a $V$-geodesic (line segment) $\overline{xy}$ of length $2\rho_V(e)$,
representing a Rips edge $e \in R(X,V)_{\rho_V(e)}$.
We would like to estimate $\rho_K(e)$,
thus giving $e \in P_{\alpha,\varepsilon}$ for $\alpha = \rho_K(e)$
and $\varepsilon = \alpha - \rho_V(e)$.
Some simple geometric observations allow us to estimate $\alpha$ and $\varepsilon$.

\begin{lemma}
If there exists $p \in \overline{xy}$ such that $k(p) =0$,
then $e \not \in P_{\alpha,0}$. 
\end{lemma}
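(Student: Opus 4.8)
The plan is to show that $e\in P_{\alpha,0}$ would force the entire line segment $\overline{xy}$ to lie in $K$, which is impossible once some $p\in\overline{xy}$ has $k(p)=0$. First unwind Definition~\ref{def:parallax}: membership $e\in P_{\alpha,0}$ requires simultaneously $\rho_V(e)\le\rho_K(e)\le\alpha$ and $\rho_K(e)\le\rho_V(e)$, hence $\rho_K(e)=\rho_V(e)<\infty$. Recalling (cf.\ the lemma preceding Definition~\ref{def:parallax}, Lemma~\ref{lem:inclusion}, and the void dictionary around Definition~\ref{def:void}) that $2\rho_V(e)=\|x-y\|=:\ell$ is the length of the $V$-geodesic $\overline{xy}$, while $2\rho_K(e)$ is the length of a shortest $K$-geodesic joining $x$ and $y$, the equality $\rho_K(e)=\rho_V(e)$ says exactly that the intrinsic path-metric distance $d_K(x,y)$ equals the chord length $\ell$; in particular $x$ and $y$ are joined by rectifiable paths in $K$ of length arbitrarily close to $\ell$.

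Now suppose toward a contradiction both that $e\in P_{\alpha,0}$ and that some $p\in\overline{xy}$ has $k(p)=0$, i.e.\ $p\notin K$. Since $x,y\in X\subset K^{\circ}$, we have $p\ne x$ and $p\ne y$, so $p$ lies in the relative interior of $\overline{xy}$; set $s_0:=\|x-p\|\in(0,\ell)$. Because $K=\overline{K^{\circ}}$ is closed, $K^c$ is open, so there is $\delta>0$ with $\overline{B_\delta(p)}\cap K=\emptyset$. For an arbitrary rectifiable path $\gamma$ in $K$ from $x$ to $y$, projecting $\gamma$ orthogonally onto the line through $x$ and $y$ and applying the intermediate value theorem produces a point $q=\gamma(t^{\ast})\in K$ whose orthogonal projection onto that line is $p$; since $q\notin\overline{B_\delta(p)}$, its transverse offset $q-p$ has norm $\ge\delta$, so by the Pythagorean theorem
\[
\operatorname{length}(\gamma)\ \ge\ \|q-x\|+\|q-y\|\ \ge\ \sqrt{s_0^{2}+\delta^{2}}+\sqrt{(\ell-s_0)^{2}+\delta^{2}},
\]
and the right-hand side equals $\ell+c$ with $c:=\sqrt{s_0^{2}+\delta^{2}}+\sqrt{(\ell-s_0)^{2}+\delta^{2}}-\ell>0$ depending only on $s_0,\ell,\delta$, hence independent of $\gamma$. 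Taking the infimum over all such $\gamma$ gives $d_K(x,y)\ge\ell+c>\ell$, contradicting $d_K(x,y)=\ell$. Thus $k(p)=0$ for some $p\in\overline{xy}$ forces $\rho_K(e)>\rho_V(e)$, and so $e\notin P_{\alpha,0}$ for every $\alpha$ (membership would require $\rho_K(e)=\rho_V(e)$).

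The only delicate ingredient is the translation of the algebraic equality $\rho_K(e)=\rho_V(e)$ into the metric statement $d_K(x,y)=\ell$, i.e.\ the interpretation of $\rho_K$ as (half of) an intrinsic $K$-geodesic length underlying Lemma~\ref{lem:inclusion}; granting that, everything else is an elementary distance estimate. Notably the estimate above does \emph{not} require a length-minimizing path in $K$ to exist, since the additive gap $c$ is uniform over $\gamma$. An alternative, slightly less self-contained route replaces the detour estimate by: a length-minimizing path in the closed set $K$ joining $x$ to $y$ (which exists by an Arzel\`a--Ascoli argument once $d_K(x,y)<\infty$) of length exactly $\ell$ must be the straight segment $\overline{xy}$ by strict convexity of the Euclidean norm, forcing $\overline{xy}\subset K$; I would avoid this because it pays the extra cost of establishing the minimizer's existence.
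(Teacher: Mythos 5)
Your proof is correct and takes the same approach as the paper, whose entire argument is the one-line remark that in $\mathbb{R}^n$ the segment $\overline{xy}$ is the unique length-minimizing path; you simply make that observation quantitative (unwinding $e\in P_{\alpha,0}\Rightarrow\rho_K(e)=\rho_V(e)$ and giving a uniform Pythagorean detour bound so that no minimizer need exist). No gaps.
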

This is because in $\mathbb{R}^n$, $\overline{xy}$ is the only path of minimal length.

\begin{algo}[Estimation of $e \in P_{\alpha,0}$]
For each $e \in R_\alpha$,
sample points $p \sim \overline{xy}$ along the corresponding line segment $\overline{xy}$.
(One method of sampling is simply to check the barycenter.)
Return True for $e$ if and only if ``$k(p) = 1 \forall p \sim \overline{xy}$''
\label{algo:bary}
\end{algo}

\begin{defn}[Transverse Disk]
Let $V = \mathbb{R}^n$. 
Given an edge $e \in R$ and a radius $r$, 
let $D_{r}(e)$ denote the codimension-1 disk,
oriented perpendicular to $\overline{xy}$,
and centered at $e$'s barycenter $\frac{x+y}{2}$.
\end{defn}

Any continuous path from $x$ to $y$ that does \emph{not} intersect $D_{r}(e)$, must have length exceeding $2\sqrt{(\rho_V(e))^2 + r^2}$.
If $k(B_r(e))= \{0\}$, then all $K$-paths avoid $D_{r}(e)$, so all $K$-paths representing $e$ must have length exceeding
$2\sqrt{(\rho_V(e))^2 + r^2}$, giving the following Lemma.
\begin{lemma}
If $K \cap D_{r}(e) = \emptyset$, then
$\sqrt{(\rho_V(e))^2 + r^2} < \rho_K(e)$
and 
$\frac{1}{2\rho_v(e)}r^2 < \rho_K(e) - \rho_V(e)$.
\end{lemma}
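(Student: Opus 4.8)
The plan is to reduce the statement to a single elementary fact about $\mathbb{R}^n$ and then translate path lengths into Rips-filtration values. Set $c\defeq\tfrac{x+y}{2}$ for the barycenter of $e$, let $u$ be the unit vector from $x$ toward $y$, recall $\|x-y\|=2\rho_V(e)$ so that $x=c-\rho_V(e)\,u$ and $y=c+\rho_V(e)\,u$, and let $H$ be the affine hyperplane through $c$ orthogonal to $u$, so that $D_r(e)=\{q\in H:\|q-c\|\le r\}$. The fact I would establish first: any continuous path $\gamma$ from $x$ to $y$ in $\mathbb{R}^n$ whose image is disjoint from $D_r(e)$ has length strictly greater than $2\sqrt{\rho_V(e)^2+r^2}$. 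Indeed, $t\mapsto\langle\gamma(t)-c,\,u\rangle$ moves continuously from $-\rho_V(e)$ to $\rho_V(e)$, so by the intermediate value theorem $\gamma$ meets $H$ at a point $q$; since $q\notin D_r(e)$ we have $\|q-c\|>r$, and since $q-c\perp u$ the Pythagorean theorem gives $\|x-q\|^2=\rho_V(e)^2+\|q-c\|^2>\rho_V(e)^2+r^2$, likewise $\|q-y\|^2>\rho_V(e)^2+r^2$, so $\gamma$ has length at least $\|x-q\|+\|q-y\|>2\sqrt{\rho_V(e)^2+r^2}$. To upgrade this to a strict bound that survives taking an infimum over paths, I would use that $K=\overline{K^\circ}$ is closed and $D_r(e)$ is compact, so $\delta\defeq d(K,D_r(e))>0$; then every path inside $K$ crosses $H$ at a point with distance $\ge r+\delta$ from $c$, so its length is $\ge 2\sqrt{\rho_V(e)^2+(r+\delta)^2}$, uniformly in the path.

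To conclude, the hypothesis $K\cap D_r(e)=\emptyset$ means that every curve lying in $K$ misses $D_r(e)$, so the bound above applies to every $K$-curve from $x$ to $y$. Reading $\rho_K(e)$ as half the length of a shortest such curve --- the geodesic reading of the Rips filtration used throughout the paper (if $x,y$ lie in different path-components of $K$ then $\rho_K(e)=\infty$ and there is nothing to prove) --- we obtain $\rho_K(e)\ge\sqrt{\rho_V(e)^2+(r+\delta)^2}>\sqrt{\rho_V(e)^2+r^2}$, which is the first inequality. The second follows by algebra on the first: squaring gives $\rho_K(e)^2-\rho_V(e)^2>r^2$, hence $\big(\rho_K(e)-\rho_V(e)\big)\big(\rho_K(e)+\rho_V(e)\big)>r^2$ and $\rho_K(e)-\rho_V(e)>r^2/\big(\rho_K(e)+\rho_V(e)\big)$; using $\rho_V(e)\le\rho_K(e)$ this is at least $r^2/\big(2\rho_K(e)\big)$, and in the near-isometric regime ($\rho_K(e)$ close to $\rho_V(e)$) it is the stated $r^2/\big(2\rho_V(e)\big)$.

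I do not expect a deep obstacle here --- the heart of the argument is just the intermediate value theorem together with Pythagoras --- but two bookkeeping points deserve care. First, $\rho_K(e)$ must be handled as the $K$-path-length quantity (half of $d_K(x,y)$) rather than literally the ``two Euclidean balls meet inside $K$'' radius of the Rips definition: for very irregular $K$ the latter can be strictly smaller, in which case the lower bound fails, so the proof tacitly relies on the geodesic reading used elsewhere in the paper, e.g.\ the monotonicity lemma $\rho_V(Y)\le\rho_{K_2}(Y)\le\rho_{K_1}(Y)$. Second, the strict inequality ``$<$'' (rather than ``$\le$'') is exactly what forces the compactness/separation step $d(K,D_r(e))>0$ above, rather than the bare pointwise Pythagorean estimate.
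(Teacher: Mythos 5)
Your treatment of the first inequality is correct and is essentially the paper's argument (the paper states the key fact as prose just above the lemma: any path from $x$ to $y$ missing the transverse disk must cross the hyperplane outside $D_r(e)$ and hence, by Pythagoras, has length exceeding $2\sqrt{\rho_V(e)^2+r^2}$); your intermediate-value-theorem crossing argument, your separation step $d(K,D_r(e))>0$ to make the bound strict uniformly over paths, and your observation that $\rho_K(e)$ must be read as the intrinsic $K$-path quantity are all sound and more careful than the paper's one-line justification.

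The gap is in the second inequality, and you half-see it yourself. Your algebra correctly yields $\rho_K(e)-\rho_V(e) > r^2/\bigl(\rho_K(e)+\rho_V(e)\bigr) \ge r^2/\bigl(2\rho_K(e)\bigr)$, but since $\rho_K(e)\ge\rho_V(e)$ this is \emph{weaker} than the claimed $r^2/\bigl(2\rho_V(e)\bigr)$, and the appeal to a ``near-isometric regime'' is not a proof --- the inequality you need goes the wrong way precisely when $\rho_K(e)>\rho_V(e)$, which is the only interesting case. In fact the stated second conclusion does not follow from the first at all: writing $a=\rho_V(e)$, one has $\sqrt{a^2+r^2}\le a+\tfrac{r^2}{2a}$ (the second-order Taylor polynomial of $\sqrt{a^2+r^2}$ \emph{overestimates} it), so $\rho_K(e)$ can lie strictly between $\sqrt{a^2+r^2}$ and $a+\tfrac{r^2}{2a}$ --- e.g.\ when $K^c$ is a thin slab neighborhood of $D_r(e)$ and paths sneak just around the rim of the disk --- and then $\rho_K(e)-\rho_V(e)<\tfrac{r^2}{2a}$. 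The paper's own proof sketch invokes exactly this Taylor inequality but with the direction reversed, so this is an error in the source rather than something you could have repaired; the correct deducible estimate is the one your algebra produced, $\rho_K(e)-\rho_V(e) > r^2/\bigl(\rho_K(e)+\rho_V(e)\bigr)$, and you should state that rather than the lemma's $r^2/\bigl(2\rho_V(e)\bigr)$.
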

For the second inequality, 
recall the 2nd order Taylor approximation
$\rho_V(e) + \frac{1}{2\rho_V(e)}r^2 \leq \sqrt{(\rho_V(e))^2 + r^2}$.

\begin{algo}[Bounding $e \in P_{\alpha,\varepsilon}$]
For each $e \in R(X,V)$,
From $r=0$, loop:
 \begin{enumerate}
  \item Evaluate $k(p)$ for samples $p \sim D_r(e)$.
  \item If $k(p) = 0 \, \forall p$, increment $r$. Else, break.
\end{enumerate}
Return the lower bounds $\sqrt{(\rho_V(e))^2 + r^2} \leq \alpha$ and $\alpha-\rho_V(e) = \varepsilon$.
\label{alg:bound}
\end{algo}

These algorithms can be extended easily to sample radii along a sequence of points on the edge $e$, thus providing an estimated $K$-path for $e$.

\subsection{Back-Propagation}
Recent work has shown that various topological properties can be expressed as loss functions that are compatible with back-propagation methods;
for example
\cite{carriere_optimizing_2021, poulenard_topological_2018, solomon_fast_2021}.
The method in \cite{solomon_fast_2021} allows back-propagation for a piecewise-smooth loss function of the form $\Phi(\PDiag(f))$, where $f$ is the filtration function on a simplicial complex,
and $\PDiag(f)$ is the persistence diagram for the $f$-persistent homology of that complex.  
Lemma~\ref{lem:HM} allows us to interpret homological matching via persistence diagrams $\PDiag(f(Y))$,
where $f(Y)=\min\{t~:~Y \in L_t\}$ for some Rips-like path $L$.

The following function $\Phi$ could be used to improve homological matching in this framework.
Suppose that $X$ and $K$ are $\lambda_{\lo}$-LSM.
Consider the persistence diagrams $\PDiag(R)$ and $\PDiag(L)$, where $L$ is some Rips-like path through $P(X,K)$.
By $\lambda_{\lo}$-LSM, we know that $\PDiag(R)$ and $\PDiag(L)$ are identical to the lower-left of $(\lambda_{\lo},\lambda_{\lo})$.
Choose a desired target value of $\lambda_{\hi}$.
Now, we alter the filtration on $L$ by the following $\Phi$:
for dots $(x,y)$ to the lower-left of $\PDiag(L)$, we penalize Wasserstein distance to $\PDiag(R)$.
For dots $(x,y) \in \PDiag(L)$ with $x \leq \lambda_{\lo} < \lambda_{\hi} \leq y$, 
we penalize by the quantity $\|x-x_0\|\exp(-y)$,
where $(x_0,y_0)$ is the best-match dot from $\PDiag(R)$.
This loss function should force $L$ to express long-lived topological features similar to those of $\PDiag(L)$, while minimizing the error introduced at scales below $\lambda_{\lo}$.
As of this June 2023 publication, our code does not implement back-propagation to
improve homological matching of models, but it is planned as upcoming work.

\section{Example: Cyclo-Octane}
\label{sec:example_cyclo}

\begin{figure}
  \begin{center}
    \raisebox{1em}{
  \includegraphics[width=0.32\linewidth]{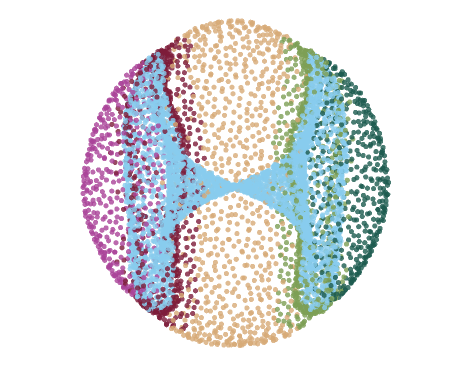}
}
\hfill
  \includegraphics[width=0.32\linewidth]{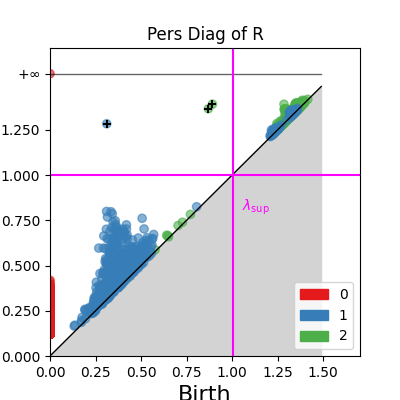}
  \hfill
  \includegraphics[width=0.32\linewidth]{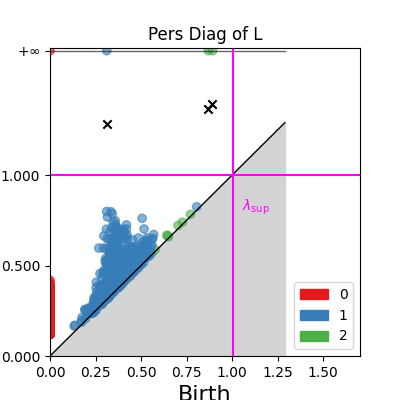}
  \end{center}
\caption{(1) Valid cyclo-octane data under 3D Isomap, colored for viewability.
(2) Persistence Diagram of valid data, showing two 2-cycles and one 1-cycle.
(3) Persistence Diagram from Parallax of model $K_2$, showing $\lambda_{\text{sup}}$ in magenta and three homologically matched cycles moved to infinity. 
These diagrams use diameter, not radius, via gudhi.}\label{fig:cyclo1}
\end{figure}
\begin{figure}
\includegraphics[width=0.33\linewidth]{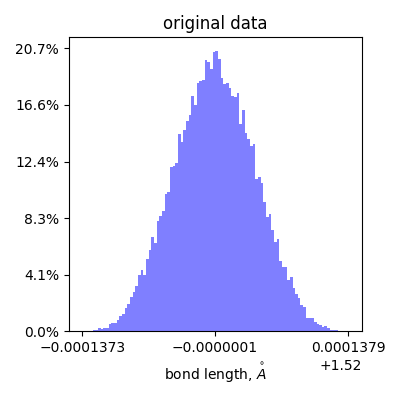}
\includegraphics[width=0.66\linewidth]{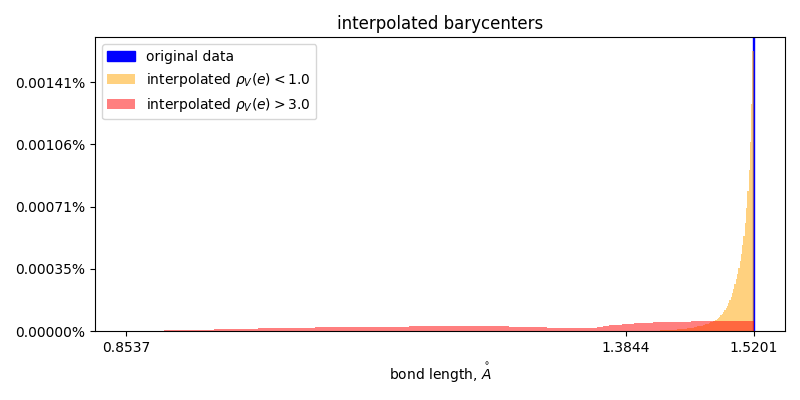}
\caption{Distributions of bond lengths for conformations of cyclo-octane.
Left: Bond lengths for the original data set, distributed tightly around 1.52 \AA, standard deviation of $4.09\times10^{-5}$ \AA.
Right: Bond lengths for all barycenters of edges $e$ with $2\rho_V(e)<1.0$, as would be allowed by the model $K_2$ described in Figure~\ref{fig:cyclo1}.  The bond lengths vary from
1.3844 to 1.5201 \AA{} with a mean of 1.5059 \AA.
Also, bond lengths for all barycenters of edges $e$ with $2\rho_V(e) > 3.0$, which are all allowed by the model $K_1$.  The bond lengths vary from
0.8537 to 1.5201 \AA{} with a mean of 1.2876 \AA.
The extremely narrow blue lines in the Right plot overlays the distribution from the original data set on the Left.}\label{fig:bonds}
\end{figure}

The conformation space of cyclo-octane~\cite{hendrickson_molecular_1967} is
well-known to have novel topological structure. From physical principles,
\citet{martin_topology_2010} show that real data sampled from
the conformation space $X$ can be reduced to lie in $V=\mathbb{R}^{24}$, and
furthermore $X$ must lie near a 2-dimensional stratified space consisting
of a sphere and a Klein bottle. As in Section~\ref{sec:intro}, we suggest that
a machine-learning model $k:V \to \{0,1\}$ trained to recognize cyclo-octane
should not be considered ``good'' or trustworthy unless $K=\{x: k(x)=(1)\}$ also
takes this geometric form at the appropriate scale.  In this section, we demonstrate how topological
parallax can support or reject the hypothesis that the geometry of a model $K$
matches the geometry of $X$.

Figure~\ref{fig:cyclo1} visualizes the dataset $X \subset \mathbb{R}^{24}$ using a 3D Isomap projection.\footnote{Note that the pinch in the middle
is an artifact of the projection and does not represent a singularity in the
actual dataset.} Following the workflow from Section~\ref{sec:HM}, we compute
the 0-, 1-, and 2-dimensional persistence diagrams of $X\subset
\mathbb{R}^{24}$ using gudhi~\cite{the_gudhi_project_gudhi_2023}, and we observe that there
are highly persistent features---one 1-cycle and two 2-cycles.\footnote{Gudhi uses diameter, not radius, as the filtration parameter.} The insight
from \cite{martin_topology_2010} provides the meaning of these cycles, but the
precise structure of the ``data manifold'' is typically not known \emph{a
priori} in examples. What we know in any case is that we want any potential $K$
to respect these cycles, because the geometry of $K$ should match the geometry
of $X$, whatever it might be.

To understand the validity of a generated conformation, we compute its
bond lengths--the distances between adjacent carbon atoms in the conformation. Bond length is 
an important physical property, together with bond angle, torsion angle, 
and energy~\cite{hendrickson_molecular_1967}. Given the rigid
geometry assumptions of the cyclo-octane data~\cite{martin_topology_2010}, we expect individual
conformations generated by trained models to have similar bond 
lengths, and therefore the distribution of bond lengths from valid conformations should be 
similar to that of generated conformations. 
See Figure~\ref{fig:bonds} to compare the bond lengths of conformations from the dataset $X$
versus those at barycenters for short edges and long edges in the Rips complex $R$ of $X$.
Notably, interpolation across longer edges leads to 
invalid conformer geometries with too short of bond lengths and thus too sharp of bond angles for realistic molecules.

Suppose someone trains a standard neural network $k_1$ to recognize this
data. For this demonstration, we used a 3-layer fully connected
network with a ReLU and a SoftMax, implemented in PyTorch.
The network was trained to near-perfect accuracy within a few minutes
on a two class problem of real data versus a nearby background.
(Hyperparameters and training details are provided in
the Supplementary Material.)
Thus, the model $k_1$ represents a common starting point that any data analyst might 
find encouraging.
We apply Algorithm~\ref{algo:bary} to estimate
which edges in $R$ are accepted by $k_1$, and discover $2\lambda_{\lo,X}(K_1)=3.45$, which is the longest edge available.
So, the Rips complex cannot distinguish $K_1$ from the convex hull;
the model does \emph{not} reflect the geometry of $X$.
This is particularly unfortunate in this case,
because the model $k_1$ might be used to generate many new conformations (any interpolation between two valid conformations), the vast majority of which will not be valid conformations.
An alternative model $k_2$ is offered, which is built from many local charts (details in Supplementary Material).
The new model has $2\lambda_{\lo,X}(K_2)=1.0$.
Moreover, the most persistent cycles in $X$ have infinite death as measured by the Rips-like path
$L_t=P_{t,0}$, so $X$ and $K_2$ are homologically matched with $2\lambda_{\hi}\approx 1.25$.
Therefore, we can claim that the geometry of $K$ matches the geometry of $X$ at these scales.

\section*{Limitations}
The parallax complex and associated objects are well-defined only for datasets and models that satisfy Definition~\ref{def:M}.
The algorithms in Section~\ref{sec:algo} assume that $V =\mathbb{R}^n$ with the Euclidean metric, but could be adapted for other geodesic spaces.
Computation of the Rips complex and its persistence diagram scale favorably with the intrinsic dimension of the dataset $X$; however, the sampling methods discussed in Section~\ref{sec:algo} scale with the dimension of $V$, which might invoke the Curse of Dimensionality.
A deeper question is whether real-life datasets $X$ of applied interest actually have enough sample density to exhibit a multiscale metric geometry, to which $K$ can be compared.
This question suggests a followup study to verify whether datasets and models with demonstrated real-world efficacy actually have computable and comparable geometries;
that is, future work should use parallax to assess the profound epistemological question of whether metric geometry is a valid metaphor for understanding deep learning in real-life applications.

\begin{ack}
Work by all authors was partially supported by
the DARPA AIE Geometries of Learning Program under contract HR00112290076,
and by the National Institute of Aerospace (NIA) under sub-award C21-202066-GDA. 
We are very grateful to Bruce Draper of DARPA and Alywn Goodloe of NASA for their technical guidance during these efforts.
We are also very grateful to Matt Dwyer, Rory McDaniel, Tom Fletcher, Yinzhu Jin, Jay Hineman, and Joey Tatro for technical discussions.
\end{ack}

\newpage


\newpage
\appendix
\section{Supplementary Material: Formal Proofs}
\begin{proof}[Proof of Corollary \ref{cor:Hdeath}]
The hypothesis $[Y]\mapsto [0]$ implies that for some $\alpha_0 \leq \alpha_2$ and $\varepsilon_0 \leq \varepsilon_2$,
there exist chains $Y \in P_{\alpha_1, \varepsilon_1}$ and $Z \in P_{\alpha_2, \varepsilon_2}$ and $W \in P_{\alpha_0, \varepsilon_0}$ such that $\partial Z = Y - W$ in $P_{\alpha_2, \varepsilon_2}$.
By Lemma~\ref{lem:inclusion}, these chain can be included to their respective levels in $R$, giving  $Y\in R_{\alpha_1}$, $Z \in R_{\alpha_2}$, and $W \in R_{\alpha_0}$, still satisfying $\partial Z = Y-W$ in $R_{\alpha_2}$.
Hence, applying $\iota_*$, either $\iota_*([Y])$ is trivial in $HR_{\alpha_1}$, or the transition map 
$HR_{\alpha_1} \to HR_{\alpha_2}$ takes $\iota_*([Y]) \mapsto [0]$.
\end{proof}

\begin{proof}[Proof of Lemma~\ref{lem:diag}]
If \(\varepsilon \geq \alpha \), then the second condition in Definition~\ref{def:parallax} becomes 
\(\rho_K(Y) - \rho_V(Y) \leq \alpha\),
which is a trivial condition for \(0 \leq \rho_V(Y) \leq \rho_K(Y) \leq \alpha\).
Thus, the sets \(P_{\alpha,\varepsilon}\) are identical for all \(\varepsilon \geq \alpha\).
\end{proof}

\begin{proof}[Proof of Lemma~\ref{lem:parallax-interleaving}]
Suppose an edge $e = (x_i,x_j)$ lies in $P_{\alpha,\varepsilon}$,
meaning that $e$ is represented by a $V$-geodesic of length $2\rho_V(e)$ satisfying
$\rho_V(e) \leq \alpha$ and a $K$-geodesic of length $2\rho_K(e)$ satisfying
$\rho_V(e) \leq \rho_K(e) \leq \min\{\alpha,\rho_V(e) + \varepsilon\}$.
By Lemma~\ref{lem:data-perturbation}, the corresponding edge 
$e' = f_\sharp(e) = (f(x_i), f(x_j))$ satisfies 
$\rho_V(e) - \kappa \leq \rho_V(e') \leq \rho_V(e) + \kappa$.
Moreover, by the data perturbation lemma on $K$, we know $e'$ is represented by a $K$-geodesic with length $2\rho_K(e')$ satisfying 
$\rho_K(e')  \leq \rho_K(e) + \kappa$.
Combining these overall, we have the parallax bounds
\[
    \rho_V(e') \leq
    \rho_K(e') \leq
    \min\{\alpha,\rho_V(e)+\varepsilon\} + \kappa 
\] 
which implies both   
$
    \rho_K(e') \leq
    \rho_V(e') + \varepsilon + 2\kappa$
and 
    $\rho_K(e') \leq \alpha + \kappa$.
\end{proof}

\begin{proof}[Proof of Lemma~\ref{lem:stabilityoflsm}]
Fix $\lambda' = \frac12\lambda_{\ball,X}(K) - \kappa$.  We aim to show that $X'$ and $K$ are $\lambda'$-LSM. 
Fix any edge $e' = (x',y')$ in $R'$ satisfying $\rho_{V}(e') \leq \lambda'$.
(The set of such edges might be empty, which is allowed by Definition~\ref{def:LSM}.)
For any such $e'$, there is a corresponding $e=(x,y) \in R$ with $f(x)=x'$, $f(y)=y'$, and $f_\sharp(e)=e'$.
The triangle inequality provides 
$\|y-x\| \leq \|y-y'\| + \|y'-x'\| + \|x'-x\| 
= 2 \lambda' + 2 \kappa \leq \lambda_{\ball,X}(K)$.
We are assuming $V$ is Euclidean, so all balls in $V$ are convex. Since $x',y,y'$ are all within the ball of radius $\lambda_{\ball,X}(K)$ around $x$, all their pairwise geodesics are included in $K$.
Hence, $e' \in P'_{\lambda',0}$.
\end{proof}

\begin{proof}[Proof of Lemma~\ref{lem:void}]
The relationship $c \leq d$ follows from Lemma~\ref{cor:Hdeath}.
The relationship $d \leq e$ follows from Lemma~\ref{lemma:bifiltered} and the observations that $P_{\alpha,\alpha} = P_{\alpha,\infty} = R(X,K)$.
Suppose $L$ is parameterized as $L_{\alpha} = P_{\alpha,\varepsilon(\alpha)}$.
Suppose that $c<e$, so either $c<d$ or $d<e$.

If $c<d$, then 
we can conclude that the edge that killed $[Y]$ in $HR(X,V)$ is not present in $HR(X,K)$,
so that edge intersected $K^c$, which is an open set.
Hence, that killing edge intersects a void in the sense of Defn~\ref{def:void}.
Let $B_r$ be an open ball in $K^c$ centered on some point on the $V$-geodesic of length $2c$.
If $B_r = K^c$, then the shortest $K$-geodesic replaces the diameter $2r$ with the half-circumference $\pi r$.
Therefore, $\pi r - 2r \leq 2d - 2c$.

If $d<e$, then $[Y] \mapsto [0]$ via $HP_{b,b} \to HP_{d,d}$ and via
$HP_{b,b} \to HP_{e,\varepsilon(e)}$,
but not for any $HP_{b,b} \to HP_{\alpha,\varepsilon(\alpha)}$ with $\alpha < e$.
Therefore, there is a $K$-geodesic of length $2d$ and a different $K$-geodesic of length $2e$,
either of which could kill $[Y]$.
The former edge is not allowed in $L$ because $0 \leq \varepsilon(d) < d-c$.
Hence, $c < d$, returning us to the first case.
\end{proof}

\begin{proof}[Proof of Lemma~\ref{lem:HM}]
Let $L$ be a Rips-like path given by Definition~\ref{def:HM}.
Because $X$ and $K$ are $\lambda$-locally simplicially matched,
the filtrations levels of $R$ and $L$ are identical up to $\lambda$.
Therefore, there is a bijection between the persistence diagrams of $R$ and $L$
for all dots born by $\lambda$.

Suppose that $(b,d)$ is a dot in the persistence diagram of $L$ with 
$b \leq \lambda < \omega < d$.  
This dot represents a class $[Y]$ born in $HL_b$ that dies in $HL_d$.
The class $[Y] \in HR_b$ is born at $b$ must die at some value $c$, so
$[Y] \in \ker HR_{b\to c}$.
It cannot be that $c \leq \delta$, because that would imply $[Y] \in \ker HL_{b\to\omega}$,
contradicting $\omega < d$.
Therefore, $\delta < c$.
\end{proof}

\begin{proof}[Proof of Theorem~\ref{thm:stability}]
Let $f_*: HP \to HP'$ denote the map
induced by $f$ and similarly, let $f^{-1}_*:HR' \to HR$ denote the map
induced by $f^{-1}$. Consider the following commutative diagram in which the
unlabeled morphisms are transition maps of the respective persistence modules.
\[
  \begin{tikzcd}
    & HR'_{\delta-\kappa}  
    \ar[r,"f_*^{-1}"]
    &
    HR_\delta
    &&
    \\
    N' \ar[r,"\subset"] & HR'_{\lambda-\kappa}
    \ar[r,"f^{-1}_*"]
    \ar[u]
    \ar[d, "="]
    &
    HR_\lambda = HP_{\lambda,\epsilon_1}
    \ar[d,"f_*"]
    \ar[r]
    \ar[u]
    &
    HP_{\omega,\epsilon_2}
    \ar[d,"f_*"]
    \\
    & HP'_{\lambda-\kappa, \epsilon_1 + 2 \kappa}
    \ar[r]
    &
    HP'_{\lambda+\kappa,\epsilon_1+2\kappa}
    \ar[r]
    &
    HP'_{\omega+\kappa,\epsilon_2+2\kappa}
  \end{tikzcd}
\]
Note that the bi-degrees in the diagram are determined by
Lemma~\ref{lem:data-perturbation} and Corollary~\ref{cor:parallax-interleaving-hom}.
Set $N' = \ker HR'_{\lambda-\kappa \to \delta - \kappa} \subset HR'_{\lambda-\kappa}$. Commutativity
of the top left square implies that $f_*^{-1}(N') \subset HR_\lambda$ maps vertically to 0 in $HR_\delta$
and hence, $f_*^{-1}(N')$ lies in $\ker HR_{\lambda \to \delta}$. The homologically matched assumption on $(X,K)$ 
implies $f_*^{-1}(N') \subset \ker HP_{(\lambda,\epsilon_1) \to (\omega,\epsilon_2)}$. By commutativity
of the bottom right square, $f_*(f_*^{-1}(N'))$ maps horizontally to $0 \in HP'_{\omega+\kappa, \epsilon_2 + 2\kappa}$.
Finally, the locally simplicially matched assumption means we can instead think of $N'$ as a subset of
$HP'_{\lambda-\kappa,\epsilon_1+2\kappa}$ which maps to $0$ in $HP'_{\omega+\kappa, \epsilon_2 + 2\kappa}$ by 
commutativity. Thus, 
\[
  N' = \ker HR'_{\lambda-\kappa \to \delta-\kappa} \subset \ker HR'_{(\lambda-\kappa,\epsilon_1+2\kappa) \to (\omega+\kappa,\epsilon_2+2\kappa)} \, ,
\]
completing the proof.
\end{proof}

\section{Supplementary Material: Code}

The code supporting this article is an initial proof-of-concept.
It is available publicly at
\begin{center}
  \url{https://gitlab.com/geomdata/topological-parallax}
\end{center}

It is designed as a simple Python package, following community best-practices for tooling and layout.
Documentation is provided there.
We recommend that the reader go to the repository for issue tracking, improvements, bug fixes, testing pipelines, etc.

Because this package relies on GUDHI \citet{the_gudhi_project_gudhi_2023}, the filtration values are by diameter (not radius).  This is important to note, because the theory in the paper is written using radius (not diameter) as the filtration value.

Topological parallax has the same computational complexity as the computation of Rips complexes and their persistence diagrams---albeit with a larger constant.
This is because parallax merely inserts a model-evaluation step upon the examination of each edge.
The constant therefore is \(tN^2\) for \(N\) points and a model that takes time \(t\) to evaluate.
There are very interesting dimension- and structure-dependendent estimates for the real-life/expected timing of Rips computations, such as \citet{bauer_keeping_2022}.
Distributed persistence can parallelize this process, as in \citet{solomon_geometry_2022}.
See \url{https://gitlab.com/geomdata/dispers}.

\begin{itemize}
\item Figure~\ref{fig:winky} was produced with the Jupyter notebook \texttt{notebooks/Winky Example.ipynb}.  Compare and contrast the Python scripts \texttt{runners/run\_winky\_nn.py} versus \texttt{runners/run\_winky\_tree.py}
\item Figure~\ref{fig:cyclo1} was produced with the Python script \texttt{runners/run\_cyclooctane\_balls.py}. Compare and contrast that script with \texttt{runners/run\_cyclooctane\_nnfar.py} and \texttt{run\_cyclooctane\_nnhull.py}.  These examples may take 100-500 GB of RAM to compute the persistence diagrams as currently implemented.
\item Figure~\ref{fig:bonds} was produced with the Python script \texttt{runners/bond\_lengths.py}
\end{itemize}

\newpage
\section{Supplementary Material: Utah Jar Example}
This example demonstrates a situation where a classifier appears to be 100\% correct on a semantically meaningful dataset, but the resulting model is too strict for interpolation within a data class,
Topological parallax detects this phenomenon, as discussed in Section~\ref{sec:interpretation}.

We applied parallax to an imagery dataset inspired by the ``Utah teapot.''
The dataset consists of 14000 images---4000 images of a rotated teapot and 10000 images of a rotated teapot with no spout or handle (referred to as the ``Utah jar'').
See Figure~\ref{fig:supjar_a}.
All images were rendered with PoV-Ray.

We constructed a bespoke classifier on this dataset of images which accepts any image within a distance of 1.0 to the set of jar images using the Euclidean metric on flattened images.
Our classifier rejects all other inputs.
We applied parallax to this model and found very small values for both $\lambda_\mathrm{lo}$ and $\lambda_\mathrm{sup}$, with $\lambda_\mathrm{lo} \sim \lambda_\mathrm{sup} < 2$.  See Figure~\ref{fig:supjar_b}.

Given the hard cut-off distance of 1.0 in the classifier and these $\lambda$ values, we expect the model to prohibit reasonable interpolations of images.
Indeed, we found numerous interpolations rejected by the model which seem valid to the human eye. See Figure~\ref{fig:supjar_c}.

\begin{figure}[h]
  \centering
  \begin{subfigure}{\textwidth}
    \includegraphics[width=0.18\textwidth]{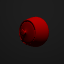}
    \includegraphics[width=0.18\textwidth]{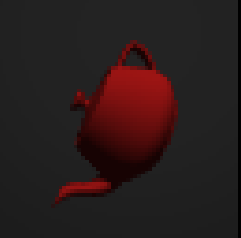}
    \includegraphics[width=0.18\textwidth]{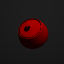}
    \includegraphics[width=0.18\textwidth]{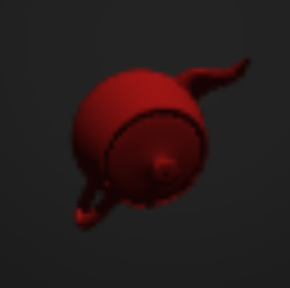}
    \includegraphics[width=0.18\textwidth]{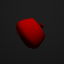}
    \caption{Examples from the modified Utah Teapot dataset.}\label{fig:supjar_a}
  \end{subfigure}
  \hfill
  \begin{subfigure}{0.4\textwidth}
    \centering
  \includegraphics[width=\textwidth]{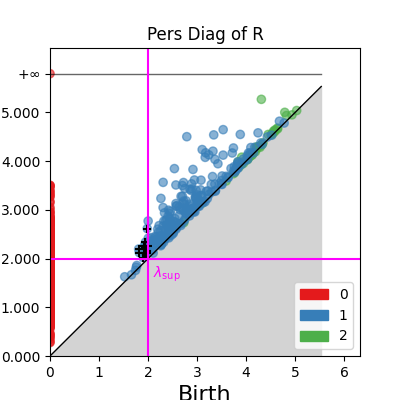}
  \caption{The persistence diagram of the dataset marked by parallax.}\label{fig:supjar_b}
\end{subfigure}
\hfill
\begin{subfigure}{0.5\textwidth}
  \includegraphics[scale=0.5]{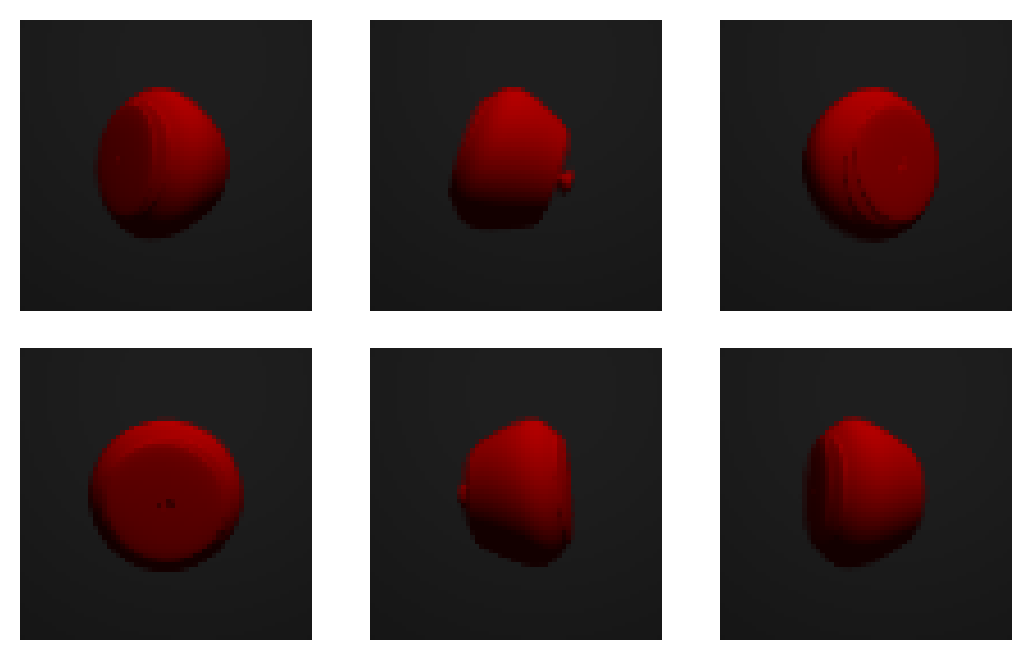}
  \caption{Interpolative images that are visually similar to true dataset samples 
  but are rejected by the overly sensitive model.}\label{fig:supjar_c}
\end{subfigure}
  \caption{Using parallax to detect a dataset-model geometry mismatch. (a) The images consist of the Utah teapot (rendered with PoV-Ray) with spouts and handles removed for simplification. 
    (b) We construct a simple classifier which accepts any images within a Euclidean distance of 1.0 of the (flattened) image and rejects all others, and
    apply parallax to it.
    (c) Six interpolated images are shown, all of which look very similar to the original dataset but are rejected by the model (corresponding to the
  X marks in (b)).}
\end{figure}

\newpage
\section{Supplementary Material: Understanding the Bi-Complex}
\label{sup:bicomplex}

The parallax complex $P(X,K,V)$ is bi-filtered by the parameters $\alpha$ (the length of a geodesic) and $\varepsilon$ (the distortion of geodesic length between $K$ and $V$).
Figure~\ref{fig:estimate} provides some visualizations that may help the reader interpret traditional barcodes of Rips-like paths, and how these bi-filtration parameters may be estimated.
These parameters are related to the size of voids in the model, as seen in Lemma~\ref{lem:void}.

\begin{figure}[h]
\begin{center}
  \begin{minipage}{0.3\linewidth}
\begin{tikzpicture}[scale=0.135]
\draw[->, thick] (0,0) -- (20,0);
\draw[->, thick] (0,0) -- (0,20);
\draw[step=1cm, gray, very thin] (0,0) grid (20,20);
\draw (20,0) node[right] {$\alpha$};
\draw (-1,19) node[above] {$\varepsilon$};
\fill[color=blue,opacity=0.3] (2,20) -- (2,2) -- (6,1) -- (7,0) -- (20,0) -- (20,20) -- cycle;
\fill[color=red,opacity=0.3] (10,20) -- (10,10) -- (15,0) -- (20,0) -- (20,20) -- cycle;
\draw[->, thick, color=black] (0, 0) -- (6,0) -- (12, 12) -- (13,18) -- (21,19);
\fill[color=black] (6.35,0.7) circle (3mm);
\fill[color=black] (10.5,9) circle (3mm);
\fill[color=black] (6.35,-2) circle (3mm);
\fill[color=black] (10.5,-2) circle (3mm);
\draw[->, thin, color=black] (0, -2) -- (21, -2) ;
\draw[very thick, color=black] (6.35, -2) -- (10.5,-2);
\draw (21,19) node[right] {\small $HL_\alpha$};
\draw (1,-4) node[right] {\small barcode of $L$};
\draw (1,21) node[right] {\small birth};
\draw (9,21) node[right] {\small death};
\end{tikzpicture}
\end{minipage}
\hfill
\begin{minipage}{0.3\linewidth}
  \raisebox{1em}{
\begin{tikzpicture}[scale=0.16]
\fill[color=black] (-10,0) circle (2mm);
\fill[color=black] (10,0) circle (2mm);
\fill[color=black] (0,0) circle (1mm);
\draw (-10,0) -- (10,0);
\draw[color=blue, opacity=0.3] (0,0) ellipse [x radius=1cm, y radius=6cm];
\fill[color=blue, opacity=0.3] (0,0) ellipse [x radius=1cm, y radius=6cm];
\draw (0,0) -- (0,6);
\draw (0.4,0) -- (0.4,0.4) -- (0,0.4);
\draw (-10,0) node[below] {$x$};
\draw (10,0) node[below] {$y$};
\draw (-5,0) node[above] {$\rho_V$};
\draw (0, 3) node[right] {$r$};
\draw[color=red, dashed] (-10,0) -- (0, 6) -- (10,0);
\draw (3,4) node[above right] {\small $\sqrt{\rho_V^2 + r^2}$};
\end{tikzpicture}
} 
\end{minipage}
\hfill
\begin{minipage}{0.3\linewidth}
  \raisebox{1em}{
\begin{tikzpicture}[scale=0.15]
\draw[->, thick] (0,0) -- (20,0);
\draw[->, thick] (0,0) -- (0,20);
\draw[step=1cm, gray, very thin] (0,0) grid (20,20);
\draw (20,0) node[right] {$\alpha$};
\draw (0,20) node[above] {$\varepsilon$};
\fill[color=blue,opacity=0.3] (14,4) --  (20,4) -- (20,20) -- (14,20) -- cycle;
\fill[color=magenta,opacity=0.3] (10,20) -- (10,10) -- (15,0) -- (20,0) -- (20,20) -- cycle;
\draw[->, thick, color=orange] (0, 0) -- (10,0) -- (12,2) -- (13.5,3.5);
\fill[color=orange] (10,0) circle (2mm);
\fill[color=orange] (13.5,3.5) circle (2mm);
\end{tikzpicture}
} 
\end{minipage}
\end{center}
\caption{
Left: Births and deaths are bi-filtered in $HP$, and are observed by barcodes on Rips-like paths. Cor~\ref{cor:parallax-interleaving-hom} means this picture is stable within $\pm(\kappa,2\kappa)$.
Center: An edge $(x,y) \in P_{\alpha,\varepsilon}$ can be estimated by sampling tubular neighborhoods. Right: Algorithm~\ref{alg:bound} underestimates $\alpha,\varepsilon$.}\label{fig:estimate}
\end{figure}
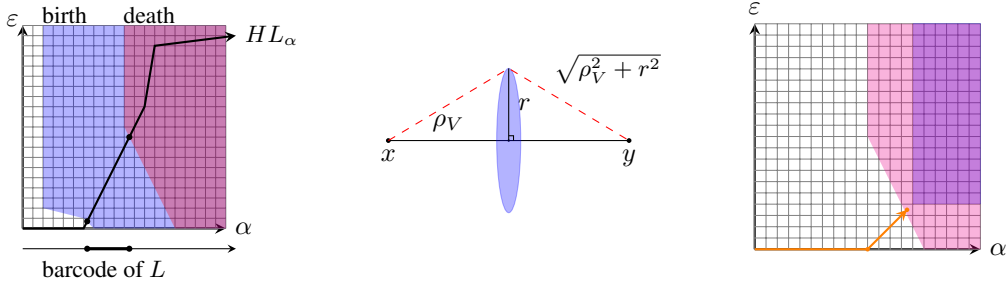

\newpage
\section{Supplementary Material: Table of Symbols}
{\small
\begin{tabular}{l|l|l|l}
Notation & Plain Meaning & First Appearance & Term \\\hline
$V$ & A geodesic space, such as $\mathbb{R}^n$ & p.2& Ambient Space \\
$X$ & A finite set in $V$ & p.2& Dataset \\
$k$& A perception function on $V$ & p.2& Model (as function) \\
$K$& Support set of $k$ & p.2& Model (as set)\\
$\mathcal{M}(X)$& Models compatible with dataset $X$& p.2\\
$\mathcal{M}^*(K)$& Datasets compatible with model $K$ & p.2\\
$K^{\circ}$& Interior of set $K$ in topological space $V$& p.2\\
$\overline{K}$& Closure of set $K$ in topological $V$& p.2\\
$K^{c}$& Complement of set $K$ in $V$& p.2\\
$\Omega$& Bounded open set in $K^c$ & p.2& Void \\
$R(X,K)$& Rips complex of $X$ in geodesic space $K$&p.4\\
$\alpha$&a filtration level or radius&p.4\\
$B_{\alpha}(x)$&Geodesic ball of radius $\alpha$ about $x$&p.4\\
$Y$& Chain in a Rips complex & p.4& \\
$\rho_K(Y)$&Minimal filtration radius for $Y$ in $R(X,K)$ &p.4&\\
$\varepsilon$&Gap in filtration radius between $V$ and $K$&p.4\\
$P$&Parallax bi-complex for $X, K, V$&p.4&Parallax\\
$HP$&Homology of $P$&p.4\\
$L$&A 1-parameter path through $P$&p.4&Rips-like Path\\
$HL$&Homology of $L$&p.4\\
$\overset{\kappa}{\approx}$&Pointwise perturbation of $X$ in $V$&p.5&Perturbation\\
$\overset{\kappa}{\approx}_K$&Pointwise perturbation of $X$ in $K$&p.5&$K$-Perturbation\\
$f_{\sharp}$&Induced map on a simplicial complex &p.5\\
$f_*$&Push-forward map on homology &p.5\\
$\lambda_\bullet$&A meaningful filtration value. See subscript.&p.6\\
\end{tabular}
}

\end{document}